%% file: main.tex
\documentclass[journal]{IEEEtran}

\usepackage[cmex10]{amsmath}
\usepackage{amssymb,amsfonts}
\usepackage{amsthm}
\usepackage{graphicx}
\usepackage{booktabs,multirow}
\usepackage{algorithm}
\usepackage{algpseudocode}
\usepackage{xcolor}
\usepackage{tikz}
\usetikzlibrary{positioning,arrows.meta,fit,calc}
\usepackage{cite}
\usepackage{url}
\usepackage[caption=false,font=footnotesize]{subfig}
\usepackage{hyperref}
\hypersetup{hidelinks}

\graphicspath{{figs/}}

\input{macros}

\newtheorem{theorem}{Theorem}
\newtheorem{proposition}{Proposition}
\newtheorem{lemma}{Lemma}
\newtheorem{corollary}{Corollary}

\theoremstyle{definition}

\theoremstyle{remark}
\newtheorem{remark}{Remark}

\begin{document}

\title{Dynamics-Aware Adaptive Corridors with Feasibility-Perturbed Trust-Region SQP for Certified Nonholonomic Motion Planning}

\author{Yang Shi%
\thanks{Y. Shi is with Jiangsu College of Safety Technology, Jiangsu, China (e-mail: shiyang1@jscst.edu.cn).}}

\markboth{Preprint}%
{Shi: Dynamics-Aware Adaptive Corridors with Feasibility-Perturbed Trust-Region SQP}

\maketitle

\begin{abstract}
\input{sections/abstract}

\end{abstract}

\begin{IEEEkeywords}
Motion planning, trajectory optimization, sequential quadratic programming, trust region, safe corridors, autonomous parking.
\end{IEEEkeywords}

\input{sections/introduction}

\input{sections/related_work}

\input{sections/problem}

\input{sections/method}

\input{sections/theory}

\input{sections/experiments}

\input{sections/discussion}

\input{sections/conclusion}

\appendices
\input{sections/appendix_proofs}

\bibliographystyle{IEEEtran}
\bibliography{refs}

\end{document}

%% file: macros.tex
\newcommand{\ours}{DA-ASC/FP-SQP}
\newcommand{\obca}{OBCA}
\newcommand{\obcact}{OBCA-CT}
\newcommand{\stc}{STC-IPOPT}
\newcommand{\corrnlp}{Corridor-IPOPT}
\newcommand{\scvx}{SCvx}
\newcommand{\scvxrb}{SCvx-RB}

\newcommand{\state}{x}                 %
\newcommand{\ctrl}{u}                  %
\newcommand{\px}{p_x}
\newcommand{\py}{p_y}
\newcommand{\pos}{p}                   %
\newcommand{\head}{\theta}             %
\newcommand{\spd}{v}                   %
\newcommand{\steer}{\delta}            %
\newcommand{\acc}{a}                   %
\newcommand{\srate}{\omega}            %
\newcommand{\wb}{L}                    %
\newcommand{\Lf}{L_{\mathrm f}}        %
\newcommand{\Lr}{L_{\mathrm r}}        %
\newcommand{\wid}{W}                   %
\newcommand{\ehead}[1]{e(#1)}          %
\newcommand{\eperp}[1]{e^{\perp}(#1)}  %
\newcommand{\Rot}[1]{R(#1)}

\newcommand{\Nint}{N}                  %
\newcommand{\Tref}{T_0}                %
\newcommand{\hzero}{h_0}               %
\newcommand{\tscale}{\sigma}           %
\newcommand{\Thor}{T}                  %
\newcommand{\hstep}{h}                 %
\newcommand{\Fmap}{F}                  %
\newcommand{\tscmin}{\sigma_{\min}}
\newcommand{\tscmax}{\sigma_{\max}}

\newcommand{\bodypt}[1]{\beta_{#1}}    %
\newcommand{\corner}[1]{c_{#1}}        %
\newcommand{\Obs}[1]{\mathcal{O}_{#1}} %
\newcommand{\Map}{\mathcal{B}}         %
\newcommand{\Corr}[1]{\mathcal{P}_{#1}}%
\newcommand{\CA}[1]{A_{#1}}            %
\newcommand{\Cb}[1]{b_{#1}}            %
\newcommand{\Mb}[1]{M_{#1}}            %
\newcommand{\shpoly}{SH polygon}        %
\newcommand{\shR}{R_{\mathrm{SH}}}      %
\newcommand{\sweep}[1]{\varepsilon_{#1}}                 %
\newcommand{\sweeps}[1]{\varepsilon^{\mathrm{s}}_{#1}}   %
\newcommand{\clr}{\mu}                 %
\newcommand{\gcon}{g}                  %
\newcommand{\Xf}{\mathcal{X}_{\mathrm{f}}} %

\newcommand{\cost}{J}
\newcommand{\dec}{z}                   %
\newcommand{\Xtraj}{X}
\newcommand{\Utraj}{U}
\newcommand{\step}{\Delta}             %
\newcommand{\trr}{r}                   %
\newcommand{\tlev}{t}                  %
\newcommand{\Dscale}{D}                %
\newcommand{\kQ}{\kappa_Q}             %
\newcommand{\restr}{\psi}              %
\newcommand{\Klqr}[1]{K_{#1}}          %
\newcommand{\ratio}{\rho}              %
\newcommand{\tauv}{\tau_{\mathrm v}}   %
\newcommand{\taus}{\tau_{\mathrm s}}   %
\newcommand{\infeas}{\mathcal{V}}      %
\newcommand{\crit}{\chi}               %
\newcommand{\trans}{\Phi}              %
\newcommand{\Hess}{H}
\newcommand{\budget}{B}                %
\newcommand{\sgain}{s_{\mathrm{gain}}}  %
\newcommand{\sact}{s_{\mathrm{act}}}    %

\newcommand{\flow}[1]{\varphi_{#1}}      %
\newcommand{\rkerr}[1]{\eta_{#1}}        %
\newcommand{\betamax}{\bar\beta}         %
\newcommand{\kproj}{\kappa_{\mathrm p}}  %
\newcommand{\Ltwo}{L_2}                  %
\newcommand{\tsum}{\Lambda}              %
\newcommand{\kQs}{\kappa_Q^{\star}}      %
\newcommand{\Body}{\mathcal{R}}          %
\newcommand{\ghat}{g_0}                  %

\newcommand{\lamo}[1]{\lambda_{#1}}    %
\newcommand{\zetao}{\zeta}             %
\newcommand{\dmin}{d_{\min}}
\newcommand{\wvirt}{w_{\nu}}           %

\newcommand{\ninf}[1]{\left\lVert #1 \right\rVert_{\infty}}

%% file: sections/abstract.tex
Optimisation-based parking planners usually impose collision constraints only at the time samples, so a vehicle corner can cut an obstacle between samples, and no executable trajectory exists until the solver converges. We present a planner for car-like vehicles with reverse gear in which every iterate of the optimisation phase satisfies the discretised dynamics exactly and keeps the whole vehicle rectangle clear of obstacles between the samples. Each time interval receives one convex corridor that holds all vehicle corners at both ends and is shrunk by a sweep margin bounding how far the corner paths leave their chords. Separating half-planes give the corridors a direction out of obstacles when the initial guess is in collision; later, heading-aligned boxes are grown from the speed, curvature and step of the current iterate and rebuilt after accepted steps. A feasibility-perturbed trust-region sequential quadratic programming method projects each step onto the dynamics by feedback and verifies it exactly; the cost decreases monotonically, and once the corridors stop changing, limit points are Karush--Kuhn--Tucker points of the corridor-constrained problem or violate a constraint qualification. On 820 benchmark cases the planner succeeds in 818 without penetration (797 from the first initial guess), none of the 7103 evaluated optimisation-phase iterates is unusable, and it succeeds in 96.5\% of the cases when 99\% of the initial guesses intersect an obstacle. Its manoeuvres take 0.7\% longer in the median than those of a similarly certified exact-collision baseline. The guarantees hold for the planning model, not for a physical vehicle.

%% file: sections/introduction.tex
\section{Introduction}\label{sec:intro}
\IEEEPARstart{P}{arking} lots, loading bays and narrow passages force a car-like vehicle to plan close to obstacles, often with several changes between forward and reverse gear. Most planners for these scenes work in two stages~\cite{Guo2024Survey,Li2023TPCAP}: a search-based front end such as Hybrid~A*~\cite{Dolgov2010HybridAstar} finds a coarse path, and a numerical optimal-control back end turns it into a trajectory that respects the vehicle dynamics. The back end discretises time and imposes the collision constraints at the samples. Between two samples a corner of the vehicle moves along an arc that the constraints do not see, and a minimum-time objective drives the trajectory against the obstacles, where such an arc can cut an obstacle vertex. In our benchmark of 820 cases, the exact dual formulation \obca{} with a minimum distance of 0.10~m returns 76 trajectories that penetrate an obstacle between samples (Section~\ref{sec:e-main}). %

The two main families of back ends pay for their collision model in different ways. Exact formulations~\cite{Zhang2021OBCA,Zhang2018HOBCA,Li2015Unified} give nonconvex nonlinear programs whose size grows with the number of obstacles; they hold at the samples only, and an interrupted solve does not in general return an executable trajectory. Corridor formulations~\cite{Li2020STC,Li2022LIOM,Guo2025TPCKC} replace the obstacles by convex sets around the front-end path and obtain linear constraints, but the sets are built from geometry around a reference that the front end supplies, so the back end inherits the quality of that reference. When we perturb the front-end path so that 99\% of the warm starts intersect an obstacle, our re-implementation of a safe-travel-corridor planner, solved by IPOPT, succeeds in 14.1\% of the cases (Section~\ref{sec:e-init}). %

Sequential convex programming (SCP) and trust-region methods, which solve the back end as a sequence of convex subproblems, split according to what their iterates satisfy. Penalty methods such as SCvx~\cite{Mao2016SCvx,Mao2019SCvx,Luo2024Remarks}, applied to parking by Boyali and Thompson~\cite{Boyali2020Parking}, carry violations of the dynamics and of the constraints until the penalty vanishes. Feasible-set methods such as CFS~\cite{Liu2018CFS} keep the iterates feasible at the samples but exclude nonlinear dynamics, and their vehicle version SLiFS~\cite{Sun2022SLiFS} penalises the dynamics again. STO~\cite{Yu2025STO} rebuilds its corridors at every SQP iteration without keeping the iterates feasible, and the anytime optimiser of Chen et al.~\cite{Chen2025Anytime} repeats complete nonlinear solves until a time limit. Among the planners in Table~\ref{tab:related}, none certifies inter-sample collision avoidance of every iterate while its convex sets change. That property is what a planner needs if it is to be stopped early: the last iterate is then a trajectory the vehicle can execute, not an intermediate point of an optimisation.

We provide this property with two components. DA-ASC builds one convex corridor per interval from the current iterate, holding all four corners of the vehicle rectangle at both ends of the interval, and shrinks it by a sweep margin that bounds how far the RK4 motion can leave the chord. FP-SQP solves a restricted quadratic subproblem inside these corridors, projects the step onto the dynamics by time-varying LQR feedback, verifies the result exactly and accepts it by a trust-region ratio test. Our earlier work used line-search and trust-region SCP with pseudospectral discretisation for longitudinal on-ramp merging~\cite{Shi2022PCO,Shi2023PCOMPC} and for entry trajectories~\cite{Wang2020ImprovedSCP}, with slack variables and $\ell_1$ merit functions and without obstacle geometry; the present planner keeps its iterates feasible instead. The contributions are as follows.
\begin{enumerate}
\item A shared convex corridor per interval with a sweep margin that certifies collision avoidance of the whole vehicle rectangle between the samples, along the motion of each interval, not only of its corners at the samples (Proposition~\ref{prop:sweep}, Corollary~\ref{cor:shared}).
\item Corridors of two kinds: separating-hyperplane (SH) polygons, which give the feasibility-restoration phase (Phase~I) a direction out of the obstacles when the warm start is in collision, and heading-aligned DA-ASC boxes for the optimisation phase (Phase~II), grown with targets from the speed, curvature and step of the current iterate and rebuilt only after accepted steps, finitely often (Section~\ref{sec:m-dasc}, Lemmas~\ref{lem:manage} and~\ref{lem:finite}).
\item FP-SQP, an adaptation of the feasibility-perturbed trust-region SQP of Wright and Tenny~\cite{Wright2004FPSQP} to corridors that change: every Phase-II iterate satisfies the RK4 dynamics and its interval motions are collision-free, the cost decreases monotonically without a penalty weight, and once the corridors stop changing every limit point is a KKT point of the corridor-constrained problem or violates the Mangasarian--Fromovitz constraint qualification (MFCQ) (Theorem~\ref{thm:fpsqp}).
\item Certified anytime behaviour and linear per-iteration cost: none of the 7103 Phase-II iterates evaluated on 139 cases is unusable, and the subproblem is an optimal-control-structured quadratic program whose cost per iteration grows linearly with the horizon (Sections~\ref{sec:e-anytime} and~\ref{sec:e-cost}). %
\item \obcact{}, a variant of \obca{} that shares one separating hyperplane per interval and adds the sweep margin, which makes the exact dual formulation certifiable between samples and serves as the strongest baseline (Proposition~\ref{prop:obcact}).
\item An evaluation on 820 cases against six baselines, all re-implemented with the same front end, vehicle model and independent collision check, including warm starts in collision and a closed-loop simulation that shows how the clearance buffer and an input margin are chosen for execution (Section~\ref{sec:exp}).
\end{enumerate}
Our planner solves 818 of the 820 cases without penetration, against 791 for \obcact{} with a 60~s limit and 809 with a 3600~s limit; with the first warm start alone it solves 797 cases, and its manoeuvres take 0.7\% longer than those of \obcact{} in the median over 791 pairs. %
The guarantees hold for the planning model; the Python prototype takes a median of 5.8~s per case, and we make no claim about real-time execution or about a physical vehicle. %

Section~\ref{sec:related} reviews related work, Section~\ref{sec:problem} states the problem, Section~\ref{sec:method} describes the planner, Section~\ref{sec:theory} gives its properties, Section~\ref{sec:exp} reports the experiments, and Section~\ref{sec:discussion} discusses the limitations.

%% file: sections/related_work.tex
\section{Related Work}\label{sec:related}

\begin{table*}[!t]
\centering
\caption{Optimisation back ends for low-speed planning with obstacles, compared by the properties this paper targets. ``Samples'': collision constraints hold only at the samples. ``Usable iterates'': every iterate of the optimisation can be executed without collision and satisfies the dynamics. A dash means that the property is not claimed; ``local solver'': convergence is that of the nonlinear programming solver used.}
\label{tab:related}
\footnotesize
\setlength{\tabcolsep}{4pt}
\begin{tabular}{@{}p{3.0cm}p{2.6cm}p{4.1cm}p{2.3cm}p{2.1cm}p{2.6cm}@{}}
\toprule
Planner & Body model & Convex set and its update & Inter-sample & Usable iterates & Convergence result \\
\midrule
OBCA, H-OBCA~\cite{Zhang2021OBCA,Zhang2018HOBCA} & rectangle & none (exact dual constraints) & samples & -- & local solver \\
STC~\cite{Li2020STC} & two discs & one box per disc and sample, built once around the path & samples & -- & local solver \\
LIOM~\cite{Li2022LIOM} & two discs & boxes rebuilt from each intermediate solution & samples & -- (dynamics penalised) & -- \\
Embodied Footprints~\cite{Li2023Embodied} & rectangle & none; footprint buffers as decision variables & buffers from speed and curvature & -- & local solver \\
Chen et al.~\cite{Chen2025Anytime} & rectangle & obstacle subset selected by a trust region & samples & -- & -- \\
STO~\cite{Yu2025STO} & rectangle & polygon per sample, rebuilt at every SQP iteration & samples & -- & -- \\
CFS~\cite{Liu2018CFS} & general & convex feasible set around each iterate & samples & yes (no equality constraints) & local optimum \\
SLiFS~\cite{Sun2022SLiFS} & discs & convex feasible sets, outer loop & samples & -- (dynamics penalised) & -- \\
SCvx~\cite{Mao2016SCvx,Mao2019SCvx,Luo2024Remarks} & general & convexified constraints & samples & -- (virtual control) & stationary point \\
\midrule
OBCA-CT (Prop.~\ref{prop:obcact}) & rectangle & one separating hyperplane per interval & sweep margin & -- & local solver \\
\ours{} (ours) & rectangle & one corridor per interval, rebuilt after accepted steps, finitely often & sweep margin & yes (Phase~II, interval motions) & KKT or MFCQ fails (Thm.~\ref{thm:fpsqp}) \\
\bottomrule
\end{tabular}
\end{table*}

Table~\ref{tab:related} places our planner among the optimisation back ends that a reviewer of low-speed planning is likely to compare it with. The three subsections below follow its columns.

\subsection{Collision Formulations and Inter-Sample Safety}\label{sec:rw-collision}
Exact collision formulations constrain the full vehicle shape against every nearby obstacle. \obca{}~\cite{Zhang2021OBCA} writes the distance between two polytopes through dual variables and makes the constraint smooth; H-OBCA~\cite{Zhang2018HOBCA} warm-starts it from Hybrid~A*~\cite{Dolgov2010HybridAstar}, and TDR-OBCA~\cite{He2021TDROBCA} adds temporal and dual-variable warm starts and reformulates the problem. Li and Shao~\cite{Li2015Unified} use a polygon-area test instead. All of these impose the constraint at the samples of the discretised problem, and the number of constraints grows with the number of obstacles. With the step lengths that minimum-time parking produces, a corner can cross an obstacle vertex between two samples: in our benchmark, \obca{} with $\dmin=0.10$~m returns 76 penetrating trajectories in 820 cases (Section~\ref{sec:e-main}). %

Three ideas address the gap between samples. Embodied Footprints~\cite{Li2023Embodied} enlarge the footprint by buffers that depend on speed and curvature and optimise the buffers with the trajectory; the analysis assumes forward driving and states that reverse driving is symmetric. Allamaa et al.~\cite{Allamaa2023Envelope} bound the collocation polynomial of an orthogonal-collocation transcription by a convex envelope, so that point constraints hold along the whole interval. Swept-volume methods~\cite{Li2025Swept} constrain the region swept by the body directly. Our sweep margin belongs to the first group, with a different mechanism: the four corners at both ends of an interval lie in one convex corridor, so the chord of every corner path lies in it as well, and only the arc-to-chord deviation of the RK4 motion remains to be bounded (Proposition~\ref{prop:sweep}). The margin is a smooth function of the state and the time scale, so the subproblem stays a quadratic program; the same argument applied to one separating hyperplane per interval gives the certified variant \obcact{} (Proposition~\ref{prop:obcact}), which we use as the strongest baseline.

\subsection{Corridor Construction}\label{sec:rw-corridor}
Corridor methods replace the obstacles by a convex set around a reference, which turns the collision constraint into linear inequalities. The safe travel corridors of Li et al.~\cite{Li2020STC} cover the vehicle by two discs, dilate the obstacles by the disc radius, and give each disc centre an axis-aligned box per sample, grown around the front-end path once. LIOM~\cite{Li2022LIOM} keeps the disc-and-box model but rebuilds the boxes from each intermediate solution, softening the dynamics by a penalty so that each intermediate problem has only box constraints; in its hardware test, 0.01~m is added to the disc radius to absorb small tracking errors. Liu et al.~\cite{Liu2026Corridor} combine corridor constraints with parking-slot matching, and TPCKC~\cite{Guo2025TPCKC} adds vertex-to-polytope constraints only for the obstacles that the current solution hits. Schäfer et al.~\cite{Schafer2023Solution} derive driving corridors from reachable sets on structured roads. STO~\cite{Yu2025STO}, the closest method in structure, grows a polygon around each reference point by GJK-accelerated ellipse shrinking and expansion and rebuilds it at every SQP iteration, including across gear shifts. None of these corridor planners claims that its iterates are usable before convergence.

In robotics, safe flight corridors~\cite{Liu2017SFC}, IRIS~\cite{Deits2015IRIS} and FIRI~\cite{Wang2024FIRI} build large convex polytopes around ellipsoids; FIRI calls a region manageable when it contains a prescribed seed set, and our Lemma~\ref{lem:manage} uses the term in that sense. Linear safe corridors~\cite{Park2022LSC} give a quadrotor swarm continuous-time safety through the convex-hull property of Bernstein polynomials, and SFC-constrained SCP has been applied to fixed-wing aircraft~\cite{Sun2025SFCSCP}. The \shpoly{} of Section~\ref{sec:method} is simpler than these regions: it intersects a heading-aligned box with one supporting half-plane per nearby obstacle and does no ellipsoid inflation. We use it because, when the vehicle overlaps an obstacle, the normal of that half-plane points out of the obstacle, which gives Phase~I a descent direction from warm starts that are in collision.

\subsection{Sequential Convex Programming and Trust-Region Convergence}\label{sec:rw-scp}
Sequential convex programming methods differ in whether their iterates are feasible. SCvx~\cite{Mao2016SCvx,Mao2019SCvx} relaxes the linearised dynamics by a virtual control with an exact penalty and proves convergence to a stationary point; Luo et al.~\cite{Luo2024Remarks} identified errors in the original convergence proof and re-established convergence under stricter assumptions, and we refer to SCvx convergence in that form. GuSTO~\cite{Bonalli2019GuSTO,Bonalli2023SCP} updates the trust region and the penalty weight jointly and shows convergence in continuous time, and continuous-time SCvx~\cite{Elango2025ctSCvx} enforces path constraints between nodes through an integrated exterior penalty. Boyali and Thompson~\cite{Boyali2020Parking} applied successive convexification to parking. In all penalty-based methods an iterate may violate the dynamics or the constraints until the penalty vanishes, so interrupting them yields no usable trajectory; our SCvx baseline also needs its penalty weight tuned (Section~\ref{sec:e-setup}).

Feasible methods keep the constraints satisfied along the way. CFS~\cite{Liu2018CFS} solves a sequence of problems on a convex feasible set around the current iterate and converges to a (weak) local optimum, but its assumptions exclude equality constraints and therefore nonlinear dynamics; SLiFS~\cite{Sun2022SLiFS} brings the idea to vehicles with discs and handles the nonlinear dynamics by an $\ell_1$ penalty and a trust region. Sequential convex restriction~\cite{Scheffe2023SCR} keeps racing trajectories feasible by restricting the track to convex polygons. Feasible SQP in nonlinear programming~\cite{Panier1993FSQP,Lawrence2001FSQP} tilts the search direction into the feasible set; the feasibility-perturbed trust-region SQP of Wright and Tenny~\cite{Wright2004FPSQP,Tenny2004NMPC} instead perturbs each step back to feasibility and proves global convergence under a restriction of the subproblem. We adapt the latter, with Hauser's projection operator~\cite{Hauser2002Projection} as the perturbation, and extend its analysis to corridors that change after accepted steps.

``Trust region'' also has other meanings in vehicle planning. Chen et al.~\cite{Chen2025Anytime} use a trust region to select the obstacles that enter the constraints, and their anytime loop repeats complete nonlinear solves until a time limit; Bu et al.~\cite{Bu2026AdaptiveTR} adapt a trust region around local terrain approximations; for spacecraft, Bernardini et al.~\cite{Bernardini2024StateTR} make the radius state-dependent. Jaitly et al.~\cite{Jaitly2025Ellipsoidal} express ellipsoidal corridors as projections of orthogonal trust regions. Among the planners in Table~\ref{tab:related}, none certifies inter-sample collision avoidance of every iterate while its convex sets change.

%% file: sections/problem.tex
\section{Problem Formulation}\label{sec:problem}

\subsection{Vehicle Model and Time Grid}
The vehicle is a kinematic bicycle with the reference point at the centre of the rear axle, which is adequate for the low speeds of parking and manoeuvring~\cite{Polack2017Bicycle,Kong2015Models}. The state $\state=(\px,\py,\head,\spd,\steer)$ collects position, heading, signed speed and front-wheel steering angle; the input $\ctrl=(\acc,\srate)$ collects acceleration and steering rate:
\begin{equation}\label{eq:ct}
\begin{aligned}
&\dot\px=\spd\cos\head,\quad \dot\py=\spd\sin\head,\quad \dot\head=\spd\tan\steer/\wb,\\
&\dot\spd=\acc,\quad \dot\steer=\srate .
\end{aligned}
\end{equation}
Reverse gear needs no integer variable: $\spd$ changes sign continuously, and a gear change is a zero crossing of $\spd$ whose timing the optimiser can move. The horizon is split into $\Nint$ intervals of equal length $\hstep=\tscale\hzero$, where $\hzero=\Tref/\Nint$ comes from the warm start and the time-scaling variable $\tscale\in[\tscmin,\tscmax]$ sets the horizon $\Thor=\tscale\Tref$. The input is held constant on each interval, and one RK4 step gives the transition map
\begin{equation}\label{eq:model}
\state_{k+1}=\Fmap(\state_k,\ctrl_k,\tscale\hzero),\qquad k=0,\dots,\Nint-1 .
\end{equation}
Speed, steering angle, acceleration and steering rate are bounded by $\spd_{\max}$, $\steer_{\max}<\pi/2$, $\acc_{\max}$ and $\srate_{\max}$ (Table~\ref{tab:params}).

\subsection{Vehicle Body, Obstacles and Map}
The vehicle occupies a rectangle of wheelbase $\wb$, front overhang $\Lf$, rear overhang $\Lr$ and width $\wid$. Its four corners have body-frame offsets $\bodypt{j}$, $j=1,\dots,4$, from the rear axle and world positions $\corner{j}(\state)=\pos+\Rot{\head}\bodypt{j}$ with $\pos=(\px,\py)$; the occupied set at state $\state$ is the convex hull of these corners. Obstacles are closed convex polygons $\Obs{m}$, $m=1,\dots,M$; non-convex obstacles are split into triangles by ear clipping. The vehicle must stay inside the map box $\Map$. We model the whole rectangle rather than a disc cover: a cover by three discs extends up to 0.275~m beyond the sides and 0.465~m beyond the bumpers, whereas the rectangle at the start or goal pose of the TPCAP benchmark~\cite{Li2023TPCAP} clears the nearest obstacle by less than 0.465~m in 16 of the 20 cases, by less than 0.275~m in 6, and by 0.148~m in the tightest case. %

\subsection{Planning Problem}
The vehicle starts at rest at $\state_0=\state_{\mathrm{init}}$ and must end in the terminal box $\Xf=\{\state:|\state-\state_{\mathrm{goal}}|\le\bar\state\}$ (componentwise), which bounds position, heading and speed and leaves the final steering angle free. A box rather than an equality is used because the projection of Section~\ref{sec:m-accept} satisfies the dynamics exactly but moves the final state by a second-order amount. The objective trades time against control effort,
\begin{equation}\label{eq:cost}
\cost(\Utraj,\tscale)=w_T\tscale\Tref+\tscale\hzero\sum_{k=0}^{\Nint-1}\big(w_a\acc_k^2+w_\omega\srate_k^2\big),
\end{equation}
and the planning problem is
\begin{equation}\label{eq:P}
\begin{aligned}
\min_{\Xtraj,\Utraj,\tscale}\ &\cost(\Utraj,\tscale)\quad\text{s.t. \eqref{eq:model}, the bounds, } \state_0=\state_{\mathrm{init}},\ \state_\Nint\in\Xf,\\
&\operatorname{conv}\{\corner{j}(\state(t))\}_{j=1}^4\subset\Map\setminus\textstyle\bigcup_m\Obs{m}\quad\forall t\in[0,\Thor],
\end{aligned}
\end{equation}
where, for $t\in[k\hstep,(k+1)\hstep]$, $\state(t)$ is the \emph{interval motion}: the solution of \eqref{eq:ct} under $\ctrl_k$ started at the sampled state $\state_k$. The last constraint is imposed for every $t$, not only at the samples; Section~\ref{sec:method} replaces it by finitely many linear constraints that imply it. One RK4 step is not exact, so an interval motion ends close to $\state_{k+1}$ but not at it, and the interval motions do not join into one solution of \eqref{eq:ct}. That solution, started at $\state_0$ and driven by the whole input sequence, is the \emph{executed motion}; Remark~\ref{rem:rk4} bounds its distance from the interval motions.

A trajectory is \emph{usable} if its interval motions keep the rectangle clear of all obstacles and inside the map for all $t$, it respects the bounds and it ends in $\Xf$. A benchmark run counts as a \emph{success} if the planner terminates and an independent check (Section~\ref{sec:e-setup}) confirms that the returned trajectory is usable.

%% file: sections/method.tex
\section{DA-ASC Corridors and FP-SQP}\label{sec:method}

Figure~\ref{fig:overview} and Algorithm~\ref{alg:main} summarise the planner. A Hybrid A* search with Reeds--Shepp expansions~\cite{Dolgov2010HybridAstar,Reeds1990RS}, run on the same rectangle, returns a collision-free path, and a trapezoidal speed profile on each gear segment turns it into the warm start $(\Xtraj^0,\Utraj^0,\Tref)$; we call the two together the front end. Phase~I turns the warm start, which violates the dynamics and often the corridors, into a feasible trajectory. Phase~II then optimises \eqref{eq:cost} while every accepted iterate stays feasible.

\begin{figure}[t]
\centering
\begin{tikzpicture}[font=\scriptsize, node distance=4mm and 4.5mm,
  blk/.style={draw, rounded corners=1.5pt, align=center, inner sep=2.5pt, minimum height=7mm, text width=19mm, fill=white},
  ours/.style={blk, draw=blue!60!black},
  arr/.style={-{Stealth[length=1.6mm]}, semithick}]
\node[blk] (fe) {front end:\\Hybrid A*, profile};
\node[ours, right=of fe] (p1) {Phase I:\\min $\infeas$ on SH polygons};
\node[ours, right=of p1] (pv) {projection,\\verification};
\node[ours, below=7mm of pv] (sub) {subproblem \eqref{eq:sub}\\(HPIPM)};
\node[ours, left=of sub] (proj) {projection \eqref{eq:proj},\\verification \eqref{eq:verify}};
\node[ours, left=of proj] (acc) {ratio test,\\rebuild \eqref{eq:rebuild}};
\draw[arr] (fe) -- (p1);
\draw[arr] (p1) -- (pv);
\draw[arr] (pv) -- (sub);
\draw[arr] (sub) -- (proj);
\draw[arr] (proj) -- (acc);
\draw[arr] (acc.south) -- ++(0,-3mm) -| (sub.south);
\node[below=5.5mm of proj, font=\tiny] {every accepted iterate is certified};
\node[draw=blue!60!black, dashed, rounded corners, inner xsep=1.5mm, inner ysep=2.2mm, fit=(sub)(proj)(acc)] (ph2) {};
\node[anchor=south west, font=\scriptsize, inner sep=1pt] at (ph2.north west) {Phase II};
\end{tikzpicture}
\caption{Structure of DA-ASC/FP-SQP. Phase~I restores feasibility on SH polygons; in Phase~II every accepted iterate is dynamically exact and collision-free between samples, so the loop can be interrupted after any accepted step.}
\label{fig:overview}
\end{figure}

\subsection{Shared Convex Corridors and the Sweep Margin}\label{sec:m-corridor}
Each interval $k$ has one corridor $\Corr{k}=\{\pos:\CA{k}\pos\le\Cb{k}\}$, a convex polygon with unit-norm rows $a_i^\top$, that must hold all four corners at both ends of the interval (Fig.~\ref{fig:corridor}a). With the sweep margin of Proposition~\ref{prop:sweep} and the clearance buffer $\clr$, the corridor constraints are
\begin{equation}\label{eq:corr}
\gcon_{k,j,q,i}(\dec)=a_i^\top\corner{j}(\state_q)+\sweeps{k,j}+\clr-b_i\le0,\qquad q\in\{k,k+1\},
\end{equation}
for every corner $j$ and row $i$. The margin $\sweep{k,j}=\Mb{k}(\bodypt{j})\hstep^2/8$ of \eqref{eq:sweep} contains maxima and absolute values, so the algorithm uses a smooth upper bound $\sweeps{k,j}$. With $\epsilon_0=10^{-3}$, let
\begin{equation}\label{eq:epss}
\begin{aligned}
\bar V_k^2&=\spd_k^2+\spd_{k+1}^2,\qquad \bar K_k^2=(\tan^2\steer_k+\tan^2\steer_{k+1})/\wb^2,\\
\bar S_k&=\sec^2\steer_k+\sec^2\steer_{k+1}-1,\qquad \langle x\rangle=(x^2+\epsilon_0^2)^{1/2}.
\end{aligned}
\end{equation}
$\sweeps{k,j}$ is \eqref{eq:sweep} with $\Theta_{1,k}$ replaced by $\bar V_k^2\bar K_k^2$, $V_k^2K_k$ by $\bar V_k^2\langle\bar K_k\rangle$, $\Theta_{2,k}$ by $\langle\acc_k\rangle\langle\bar K_k\rangle+\langle\bar V_k\rangle\langle\srate_k\rangle\bar S_k/\wb$ and $|\acc_k|$ by $\langle\acc_k\rangle$. Each replacement is at least the term it replaces, and $\Mb{k}$ is nondecreasing in these terms, so $\sweeps{k,j}\ge\sweep{k,j}$. Every square root has a positive argument, and the first component of $\Mb{k}$ is at least $\epsilon_0$, so $\sweeps{k,j}$ is infinitely differentiable for $|\steer_k|,|\steer_{k+1}|<\pi/2$. %
 The same $\sweeps{k,j}$ is used in Phase~I, in the subproblem and in the verification, so that a feasible iterate makes $\step=0$ feasible for the subproblem. With $\hstep=\tscale\hzero$, $\partial\sweeps{k,j}/\partial\tscale=2\sweeps{k,j}/\tscale$ exactly.

For fixed corridors, the problem solved in Phase~II is
\begin{equation}\label{eq:PF}
\begin{aligned}
\min_{\dec=(\Xtraj,\Utraj,\tscale)}\ &\cost(\Utraj,\tscale)\\
\text{s.t. }&\text{\eqref{eq:model}, }\state_0=\state_{\mathrm{init}},\ \text{bounds},\ \state_\Nint\in\Xf,\ \eqref{eq:corr}.
\end{aligned}\tag{P$_\mathrm F$}
\end{equation}
By Corollary~\ref{cor:shared} and Remark~\ref{rem:rk4}, every feasible point of \eqref{eq:PF} with valid corridors whose RK4 local errors satisfy $\rkerr{k}<\clr-\tauv$ is a feasible point of \eqref{eq:P}: the non-convex constraint for all $t$ has become a finite set of constraints that are linear in the corner positions.

\begin{figure}[t]
\centering
\includegraphics[width=\columnwidth]{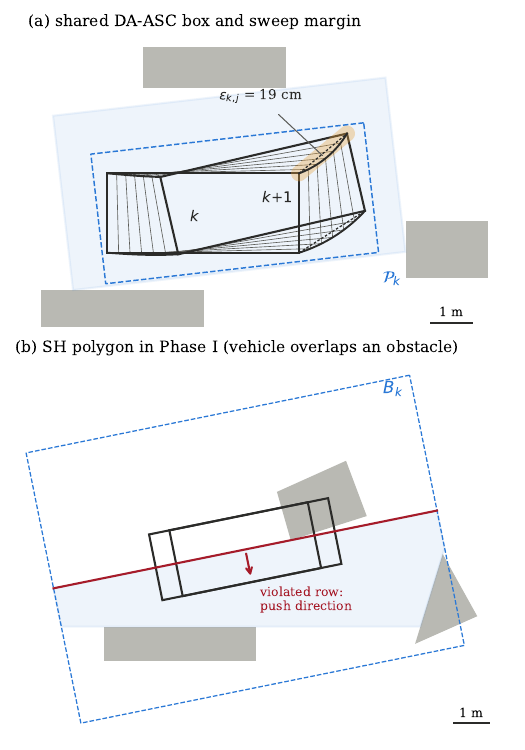}
\caption{Corridors of one interval. (a) A DA-ASC box $\Corr{k}$ (shaded) grown from the seed box (dashed) holds all four corners at $k$ and $k+1$; each corner moves along an arc (solid) that stays within $\sweep{k,j}$ of its chord (dotted), shown for the front-left corner. The step is 1.0~s here to make the arcs visible; the planner's steps do not exceed 0.31~s. (b) An SH polygon (shaded) during Phase~I: the bounding box $B_k$ (dashed) is cut by one half-plane per obstacle. The vehicle overlaps the upper obstacle, so the corresponding row is violated and its normal gives the direction in which Phase~I moves the vehicle.}
\label{fig:corridor}
\end{figure}

\subsection{Corridor Generation: SH Polygons and DA-ASC Boxes}\label{sec:m-dasc}
Both corridor types start from the eight corners of the interval, each inflated by its own margin $\sweeps{k,j}+\clr$, and from the orientation $\phi_k$, the circular mean of $\head_k$ and $\head_{k+1}$. The body axis is used instead of the direction of motion because it stays defined at a cusp, where the vehicle stops, and a rectangle is invariant under $\phi\mapsto\phi+\pi$, so forward and reverse segments are treated alike.

\emph{SH polygons.} An SH polygon (Fig.~\ref{fig:corridor}b) is the intersection of a heading-aligned bounding box $B_k$, the smallest box around the inflated corners grown by $\shR$ on every side, with one half-plane per obstacle that meets $B_k$; the map boundary counts as four obstacles. For each obstacle the half-plane normal is chosen among the obstacle's edge normals and the edge normals of the convex hull of the corners as the one with the largest signed gap between corners and obstacle, and its boundary line is placed on the obstacle's support line. If the inflated corners are clear of the obstacle, this normal separates them from it and the corners satisfy the row. If they overlap it, it is the direction of minimum translation, and the violated row tells Phase~I in which direction to push the vehicle. An SH polygon never contains an obstacle point in its interior.

\emph{DA-ASC boxes.} A DA-ASC box starts from the seed box, the smallest $\phi_k$-aligned box that contains the inflated corners, and is discarded if the seed box meets an obstacle. The box then grows along $\pm\ehead{\phi_k}$ and $\pm\eperp{\phi_k}$ by step doubling with back-off, testing every trial box against the convex obstacles and the map boundary with the separating-axis theorem. Growth has two stages. The dynamics-aware stage first extends each side towards a target taken from the current iterate: longitudinally by $\tfrac12(|\spd_k|+|\spd_{k+1}|)\hstep+r_{\mathrm{pos}}$, the distance covered in one step, and laterally by $\max_j\sweeps{k,j}+r_{\mathrm{pos}}$, the room the corners need between samples. The second stage grows every side greedily up to $l_{\max}$ beyond the seed. Aligning the box with the vehicle has a second benefit: Lemma~\ref{lem:dir} shows that the linearisation error of the corners along the side rows of an aligned box is smaller by a factor of about four than along a row at a large angle to the heading.

\emph{Use in the two phases.} Phase~I uses SH polygons for all intervals, built around the warm start. At the start of Phase~II every interval whose seed box fits at the current, now feasible, iterate receives a DA-ASC box, and the others keep their SH polygon. On the 818 solved benchmark cases, 2.8\% of the intervals kept an SH polygon at this point. %

\emph{Rebuild rule.} After an accepted step, a corridor is rebuilt only if its smallest slack $s_k=-\max_{j,q,i}\gcon_{k,j,q,i}$ at the current iterate is below $\sact$, only if it has been rebuilt fewer than $\budget$ times, and the new corridor replaces the old one only if
\begin{equation}\label{eq:rebuild}
s_k^{\mathrm{new}}\ge s_k^{\mathrm{old}}+\sgain .
\end{equation}
In Phase~II the new corridor is a DA-ASC box, or an SH polygon where the seed box does not fit. The rule preserves feasibility and allows finitely many rebuilds (Lemmas~\ref{lem:manage} and~\ref{lem:finite}). Rebuild attempts for an interval whose corners have moved by less than 5~cm since the last rejected attempt are skipped, which saves computation and does not affect the lemmas.

\subsection{Restricted Subproblem}\label{sec:m-subproblem}
At a feasible iterate $\bar\dec$, Phase~II computes a step $\step=(\step\Xtraj,\step\Utraj,\step\tscale)$ from the quadratic program
\begin{equation}\label{eq:sub}
\begin{aligned}
\min_{\step,\tlev}\ &\nabla\cost(\bar\dec)^\top\step+\tfrac12\step^\top\Hess\step\\
\text{s.t. }&\step\state_0=0,\ \ \step\state_{k+1}=A_k\step\state_k+B_k\step\ctrl_k+C_k\step\tscale,\\
&d_i(\bar\dec)+\nabla d_i(\bar\dec)^\top\step+\restr(\tlev)\le[d_i(\bar\dec)]_+\quad\forall i,\\
&\ninf{\Dscale\step}\le\tlev\le\trr,
\end{aligned}
\end{equation}
where $A_k$, $B_k$, $C_k$ are the Jacobians of \eqref{eq:model} with respect to $\state_k$, $\ctrl_k$ and $\tscale$, the $d_i$ collect the corridor constraints \eqref{eq:corr}, the state and input bounds and the terminal box, and $\Hess$ is the Hessian of \eqref{eq:cost} in $\Utraj$ at fixed $\tscale$; the cross term between $\tscale$ and $\Utraj$ is left to the ratio test. The bounds on $\tscale$ enter without restriction because the projection does not change $\tscale$. Three features distinguish \eqref{eq:sub} from a standard SQP subproblem. There is no virtual control, because $\bar\dec$ satisfies the dynamics. The right-hand side $[d_i(\bar\dec)]_+$ absorbs constraint violations of the order of the solver tolerance, so $\step=0$ is always feasible (Lemma~\ref{lem:tol}). The restriction term $\restr(\tlev)\ge0$ vanishes at $\step=0$ and covers the second-order error of the linearisation and of the projection. With $\restr=\kQ\tlev^2$ the subproblem is a convex quadratically constrained program; the implementation uses $\restr=\kQ\trr\tlev$, which is linear in $(\step,\tlev)$ for the current radius $\trr$ and at least as conservative, so \eqref{eq:sub} is a quadratic program.

\subsection{Projection, Verification and Acceptance}\label{sec:m-accept}
The step of \eqref{eq:sub} satisfies only the linearised dynamics. The projection closes the dynamics with a time-varying LQR feedback around the linear prediction, in the spirit of the projection operator of Hauser~\cite{Hauser2002Projection}:
\begin{equation}\label{eq:proj}
\begin{aligned}
\tilde\ctrl_k&=\bar\ctrl_k+\step\ctrl_k-\Klqr{k}\big(\tilde\state_k-\bar\state_k-\step\state_k\big),\\
\tilde\state_{k+1}&=\Fmap\big(\tilde\state_k,\tilde\ctrl_k,(\bar\tscale+\step\tscale)\hzero\big),\qquad\tilde\state_0=\state_{\mathrm{init}},
\end{aligned}
\end{equation}
with gains $\Klqr{k}$ from the Riccati recursion on $(A_k,B_k)$. The trial point $\tilde\dec$ satisfies the dynamics exactly, and without feedback its deviation from the linear prediction would grow along the horizon (Lemma~\ref{lem:proj}). The a-posteriori verification then evaluates every inequality constraint at $\tilde\dec$,
\begin{equation}\label{eq:verify}
d_i(\tilde\dec)\le\tauv\quad\forall i,\qquad 0\le\tauv<\clr ,
\end{equation}
so the certificate of an accepted iterate does not rely on the constant $\kQ$ or on the QP solver's accuracy.

If \eqref{eq:verify} fails, the trial point is discarded, $\trr$ is halved and $\kQ$ is doubled; the lower bound below which $\kQ$ may later be decreased is raised to the new value (Lemma~\ref{lem:ratchet}). If it holds, the ratio $\ratio$ of actual to predicted decrease of $\cost$ decides acceptance ($\ratio\ge0.1$) and the radius update ($\trr$ is set to half the step length for $\ratio<0.25$ and doubled up to $\trr_{\max}$ for $\ratio>0.75$ when the trust region is active); after a verified step, $\kQ$ is halved if the result stays above both the lower bound and twice the observed second-order error. The objective itself is the merit function, so no penalty weight appears. In the bilinear mode a large radius inflates the restriction and can make the predicted decrease vanish at a non-critical point; a step whose predicted decrease is below $\nu\trr^2$ is therefore rejected and $\trr$ halved (Lemma~\ref{lem:descent}). Phase~II stops when the predicted decrease falls below a relative tolerance and the criticality measure $\crit$, the negative optimal value of \eqref{eq:sub} with $\kQ=0$ at the current radius, is below the same tolerance; otherwise $\trr$ is halved. It also stops when $\trr<10^{-8}$ and after 300 iterations. These are practical stopping rules, not a certificate of stationarity: both quantities scale with $\trr$ and become small at a small radius whether or not the iterate is stationary, and the implementation also stops if the subproblem for $\crit$ cannot be solved. Every returned iterate is feasible and verified; whether it is close to a KKT point is not certified.

\subsection{Phase I and Restarts}\label{sec:m-phase1}
Phase~I minimises the infeasibility $\infeas(\dec)=\|c(\dec)\|_1+\sum_i[d_i(\dec)]_+$ with $\tscale=1$, by a trust-region method whose subproblem is the linear-programming model of $\infeas$ with a small quadratic regularisation of the inputs. Its corridors are SH polygons built around the warm start and rebuilt with rule \eqref{eq:rebuild}. When the iteration stalls with $\infeas>0$, all corridors are rebuilt around the current iterate, which is usually closer to dynamic feasibility than the warm start, and Phase~I restarts, at most twice. Once $\infeas$ is below $10^{-7}$, the projection \eqref{eq:proj} and the verification \eqref{eq:verify} produce the first Phase-II iterate. If Phase~I fails, the planner retries with a slower and finer warm start from the same path (Table~\ref{tab:params}); the finer grid keeps $\hstep$, and with it the sweep margin, small as the horizon grows. %

\subsection{Structured QP and Per-Iteration Cost}\label{sec:m-qp}
Subproblem \eqref{eq:sub} couples the intervals only through $\step\tscale$ and $\tlev$. Carrying both as constant states, $\xi_k=(\step\state_k,\step\tscale,\tlev)$ with $\xi_{k+1}=\big[\begin{smallmatrix}A_k&C_k&0\\0&1&0\\0&0&1\end{smallmatrix}\big]\xi_k+\big[\begin{smallmatrix}B_k\\0\\0\end{smallmatrix}\big]\step\ctrl_k$, turns it into an optimal-control QP with stage-wise constraints; the rows of \eqref{eq:corr} at $q=k+1$ are expressed in stage-$k$ variables through the linearised dynamics. We solve it with the Riccati-based interior-point solver HPIPM~\cite{Frison2020HPIPM}, whose cost per iteration grows linearly with $\Nint$, and fall back to Clarabel~\cite{Goulart2024Clarabel} when HPIPM does not converge within 60 iterations. Obstacles enter only the corridor generation; the QP sees $\Nint$ corridors whose rows are padded to the largest row count of the horizon (four for boxes, more for SH polygons) by repeating a row, which leaves the feasible set unchanged.

\begin{algorithm}[t]
\caption{DA-ASC/FP-SQP}\label{alg:main}
\small
\begin{algorithmic}[1]
\Require start $\state_{\mathrm{init}}$, terminal box $\Xf$, obstacles, map box $\Map$
\State front end: Hybrid A* path, speed profile $\to(\Xtraj^0,\Utraj^0,\Tref)$
\State \textbf{Phase I:} SH polygons around the warm start; minimise $\infeas$ with rebuilds \eqref{eq:rebuild} and at most two restarts; on failure retry with the next warm start
\State project \eqref{eq:proj} and verify \eqref{eq:verify}; replace SH polygons by DA-ASC boxes where the seed box fits
\Repeat \Comment{Phase II: every accepted $\dec$ is certified}
\State linearise at $\dec$; solve \eqref{eq:sub} for $(\step,\tlev)$ and the predicted decrease $-m^\star$
\If{$-m^\star<\nu\trr^2$} halve $\trr$; \textbf{continue} \EndIf
\State $\tilde\dec\gets$ projection \eqref{eq:proj} of $\dec+\step$
\If{\eqref{eq:verify} fails} halve $\trr$; double $\kQ$ and raise its lower bound; \textbf{continue} \EndIf
\State $\ratio\gets(\cost(\dec)-\cost(\tilde\dec))/(-m^\star)$; update $\trr$ and $\kQ$
\If{$\ratio\ge0.1$} $\dec\gets\tilde\dec$; rebuild corridors by \eqref{eq:rebuild} \EndIf
\Until{$-m^\star$ and $\crit$ below tolerance, or iteration limit}
\State \Return $\dec$ \Comment{also returned if interrupted after any accepted step}
\end{algorithmic}
\end{algorithm}

%% file: sections/theory.tex
\section{Theoretical Properties}\label{sec:theory}

Shared corridors with the sweep margin keep the whole vehicle rectangle clear of obstacles between samples (Section~\ref{sec:t-geom}), corridor rebuilds preserve feasibility and stop after finitely many steps (Section~\ref{sec:t-rebuild}), and every Phase-II iterate is feasible while the iterates converge in the sense of Wright and Tenny~\cite{Wright2004FPSQP} (Section~\ref{sec:t-conv}). Section~\ref{sec:t-phase1} covers Phase~I and \obcact{}; proofs are in Appendix~\ref{app:proofs}.

\subsection{Continuous-Time Collision Avoidance}\label{sec:t-geom}

Throughout, the input is held constant on each interval. For interval $k$ let $\flow{k}(\tau)$, $\tau\in[0,\hstep]$, denote the exact solution of the kinematic bicycle model that starts at $\state_k$ under the input $\ctrl_k=(\acc_k,\srate_k)$, and let $\corner{\beta}(\state)=\pos+\Rot{\head}\beta$ be the world position of a body-frame point $\beta=(\beta_x,\beta_y)$. Because $\dot\spd=\acc$ and $\dot\steer=\srate$ are constant on the interval, $\spd$ and $\steer$ are affine in $\tau$, and RK4 integrates them exactly; the bounds below therefore depend only on the sampled values at $k$ and $k+1$. Define
\begin{equation}\label{eq:Mk}
\begin{aligned}
V_k &= \max(|\spd_k|,|\spd_{k+1}|),\\
K_k &= \max(|\tan\steer_k|,|\tan\steer_{k+1}|)/\wb,\\
S_k &= \max(\sec^2\steer_k,\sec^2\steer_{k+1}),\\
\Theta_{1,k} &= V_k^2K_k^2,\qquad \Theta_{2,k}=|\acc_k|K_k+V_k|\srate_k|S_k/\wb,\\
\Mb{k}(\beta) &= \big[(|\acc_k|+|\beta_x|\Theta_{1,k}+|\beta_y|\Theta_{2,k})^2\\
&\quad+(V_k^2K_k+|\beta_x|\Theta_{2,k}+|\beta_y|\Theta_{1,k})^2\big]^{1/2}.
\end{aligned}
\end{equation}

\begin{proposition}[Sweep margin]\label{prop:sweep}
Let $|\steer_k|,|\steer_{k+1}|<\pi/2$. For every body point $\beta$ and every $\tau\in[0,\hstep]$, the point $\corner{\beta}(\flow{k}(\tau))$ lies within
\begin{equation}\label{eq:sweep}
\sweep{k}(\beta)=\Mb{k}(\beta)\,\hstep^2/8
\end{equation}
of the segment joining $\corner{\beta}(\flow{k}(0))$ and $\corner{\beta}(\flow{k}(\hstep))$.
\end{proposition}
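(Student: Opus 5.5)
The plan is to reduce the statement to a one-dimensional interpolation estimate for a $C^2$ curve, and then bound the second derivative of the corner path uniformly on the interval. Fix $\beta$ and write $\gamma(\tau)=\corner{\beta}(\flow{k}(\tau))$ for $\tau\in[0,\hstep]$. Let $\ell(\tau)=\gamma(0)+\tfrac{\tau}{\hstep}(\gamma(\hstep)-\gamma(0))$ be the chord parametrisation, so $\ell(\tau)$ lies on the segment in the statement. The distance from $\gamma(\tau)$ to the segment is therefore at most $|e(\tau)|$, where $e=\gamma-\ell$.

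The error $e$ vanishes at $\tau=0$ and $\tau=\hstep$, and $e''=\gamma''$. I would use the Green's function representation
\[
e(\tau)=-\int_0^{\hstep}G(\tau,s)\,\gamma''(s)\,ds,
\]
where $G(\tau,s)\ge0$ is the Green's function of $-d^2/d\tau^2$ with Dirichlet conditions and $\int_0^{\hstep}G(\tau,s)\,ds=\tau(\hstep-\tau)/2$. This gives, directly in the Euclidean norm and without a componentwise loss,
\[
|e(\tau)|\le \sup_s|\gamma''(s)|\,\frac{\tau(\hstep-\tau)}{2}\le \sup_s|\gamma''(s)|\,\frac{\hstep^2}{8}.
\]
It therefore suffices to show that $|\gamma''(\tau)|\le\Mb{k}(\beta)$ for all $\tau\in[0,\hstep]$.

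For the second derivative, write $e=\ehead{\head}$ and $e^\perp=\eperp{\head}$, so that $\Rot{\head}\beta=\beta_x e+\beta_y e^\perp$. Using \eqref{eq:ct} with $\dot\head=\spd\kappa$ and $\kappa=\tan\steer/\wb$, I expect
\[
\gamma''=\big(\acc-\beta_x\dot\head^2-\beta_y\ddot\head\big)e+\big(\spd\dot\head+\beta_x\ddot\head-\beta_y\dot\head^2\big)e^\perp,
\]
where $\dot\head^2=\spd^2\kappa^2$, $\spd\dot\head=\spd^2\kappa$ and $\ddot\head=\acc\kappa+\spd\srate\sec^2\steer/\wb$. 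Since $e$ and $e^\perp$ are orthonormal, $|\gamma''|$ is the Euclidean norm of the two coefficients. Bounding each coefficient by the triangle inequality gives
\begin{align*}
|\dot\head^2| &\le V_k^2K_k^2=\Theta_{1,k},\\
|\spd\dot\head| &\le V_k^2K_k,\\
|\ddot\head| &\le |\acc_k|K_k+V_k|\srate_k|S_k/\wb=\Theta_{2,k}.
\end{align*}
Substituting these reproduces exactly the two components inside $\Mb{k}(\beta)$ in \eqref{eq:Mk}.

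The step that needs care is justifying that suprema over $\tau\in[0,\hstep]$ are attained at the sampled endpoints, since $V_k,K_k,S_k$ use only the values at $k$ and $k+1$. Because $\acc$ and $\srate$ are constant on the interval, $\spd$ and $\steer$ are affine in $\tau$. Also, RK4 reproduces affine solutions exactly, so the endpoint values are the sampled $\spd_{k+1}$ and $\steer_{k+1}$. It follows that $|\spd|$ is maximised at an endpoint. Since $|\steer_k|,|\steer_{k+1}|<\pi/2$ and $\steer$ is affine, $\steer(\tau)$ stays in $(-\pi/2,\pi/2)$ on the whole interval, so $\gamma$ is $C^2$ there. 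On that range $|\tan\steer|$ and $\sec^2\steer$ are even and increasing in $|\steer|$, hence quasiconvex, so their maxima along a segment are attained at an endpoint. Combining the three paragraphs yields the bound \eqref{eq:sweep}.
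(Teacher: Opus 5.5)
Your proof is correct and follows the paper's argument: the same rotating-frame expression for the second derivative of the corner path, the same endpoint bounds from the affinity of $\spd$ and $\steer$ and the monotonicity of $|\tan\steer|$ and $\sec^2\steer$ in $|\steer|$, and the same $\tau(\hstep-\tau)/2$ interpolation estimate. The only difference is cosmetic: you get the Euclidean bound from the vector-valued Green's function, whereas the paper applies the scalar estimate to $n^\top(\corner{\beta}-\ell)$ with the unit vector $n$ chosen along the error direction.
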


The bound follows from $\|\ddot{\corner{\beta}}\|\le\Mb{k}(\beta)$ on the interval and the interpolation error of a twice-differentiable curve. It is tight: on random intervals within the bounds of Table~\ref{tab:params}, the largest ratio of the observed deviation to $\sweep{k}(\beta_j)$ over the four corners is 0.996 (median 0.24 to 0.27, depending on $\hstep$). %
With $\hstep=\tscale\hzero$, the margin grows as $\tscale^2$, so the time-scaling variable enters the corridor constraints through $\partial\sweeps{k}/\partial\tscale=2\sweeps{k}/\tscale$. We write $\sweep{k,j}=\sweep{k}(\bodypt{j})$ for the four corners; the algorithm uses a smooth upper bound $\sweeps{k,j}\ge\sweep{k,j}$ (Section~\ref{sec:m-corridor}).

A corridor $\Corr{k}=\{\pos:\CA{k}\pos\le\Cb{k}\}$ with unit-norm rows is called \emph{valid} if $\Corr{k}\subseteq\Map$ and no obstacle meets its interior, $\Obs{m}\cap\operatorname{int}\Corr{k}=\emptyset$ for all $m$. Heading-aligned DA-ASC boxes and SH polygons are both valid by construction (Lemma~\ref{lem:manage}).

\begin{corollary}[Shared corridor]\label{cor:shared}
Let $\Corr{k}$ be valid and let $\mu'>0$. If every corner satisfies
\begin{equation}\label{eq:shared}
\CA{k}\corner{j}(\flow{k}(q))+(\sweep{k,j}+\mu')\mathbf 1\le\Cb{k},\quad j=1,\dots,4,\ q\in\{0,\hstep\},
\end{equation}
then for every $\tau\in[0,\hstep]$ the vehicle rectangle lies in $\Corr{k}$, at distance at least $\mu'$ from every obstacle and from the boundary of the map box.
\end{corollary}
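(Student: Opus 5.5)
We prove Corollary~\ref{cor:shared} by combining Proposition~\ref{prop:sweep} with convexity of the corridor, row by row. Fix $\tau\in[0,\hstep]$, a corner $j$ and a row $a_i^\top p\le b_i$ of $\Corr{k}$ with $\|a_i\|=1$.

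\emph{Corners at time $\tau$.} The two endpoints $\corner{j}(\flow{k}(0))$ and $\corner{j}(\flow{k}(\hstep))$ satisfy $a_i^\top p\le b_i-\sweep{k,j}-\mu'$ by \eqref{eq:shared}. This half-plane is convex, so the whole chord joining them satisfies the same inequality. By Proposition~\ref{prop:sweep}, applied with $\beta=\bodypt{j}$, the point $\corner{j}(\flow{k}(\tau))$ lies within $\sweep{k,j}$ of some point $c$ of that chord. Hence
\[
a_i^\top\corner{j}(\flow{k}(\tau))\le a_i^\top c+\|a_i\|\,\sweep{k,j}\le b_i-\mu'.
\]
Every corner at time $\tau$ therefore lies in the shrunk polygon $\Corr{k}^{\mu'}=\{p:\CA{k}p\le\Cb{k}-\mu'\mathbf 1\}$.

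\emph{The rectangle at time $\tau$.} The polygon $\Corr{k}^{\mu'}$ is convex, and the occupied set is the convex hull of the four corners at $\flow{k}(\tau)$. So the whole rectangle lies in $\Corr{k}^{\mu'}\subseteq\Corr{k}$.

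\emph{Distance to obstacles and the map boundary.} This step needs the most care. Take $p\in\Corr{k}^{\mu'}$ and any $y$ with $\|y-p\|<\mu'$. Since the rows have unit norm, $a_i^\top y<a_i^\top p+\mu'\le b_i$ for every row $i$. The open ball $B(p,\mu')$ is therefore contained in $\operatorname{int}\Corr{k}$.

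For obstacles, validity gives $\Obs{m}\cap\operatorname{int}\Corr{k}=\emptyset$, so no obstacle meets $B(p,\mu')$ and the distance from $p$ to each obstacle is at least $\mu'$.

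For the map, validity gives $\Corr{k}\subseteq\Map$, so $B(p,\mu')\subseteq\operatorname{int}\Map$. The distance from $p$ to $\partial\Map$ is therefore at least $\mu'$.

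Both statements hold for every $\tau\in[0,\hstep]$, which proves the corollary. The only possible subtlety is the interior-versus-closure distinction at distance exactly $\mu'$. It is handled by working with open balls, which gives distance at least $\mu'$ as claimed.
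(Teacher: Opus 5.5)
Your proof is correct and follows the paper's argument. Convexity of the shrunk constraint set carries the endpoint inequalities to the whole chord. Proposition~\ref{prop:sweep} with unit-norm rows then places each corner in $\{p:\CA{k}p\le\Cb{k}-\mu'\mathbf 1\}$, the convex hull handles the rectangle, and validity of $\Corr{k}$ gives the distances. The only difference is in the last step: you show that the open ball of radius $\mu'$ lies in $\operatorname{int}\Corr{k}$, whereas the paper takes, for an obstacle point $o$, a row with $a_i^\top o\ge b_i$ and bounds $\|y-o\|_2\ge a_i^\top(o-y)\ge\mu'$ directly.
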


The corridor contains the chord of every corner shrunk by $\sweep{k,j}$, so only the arc-to-chord deviation of Proposition~\ref{prop:sweep} remains. Sharing the corridor between the two end poses is what certifies the motion; separate corridors at $k$ and $k+1$ would certify the samples only.

\begin{remark}[RK4 transitions and the executed motion]\label{rem:rk4}
The constraints in \eqref{eq:corr} are imposed at the RK4 state $\state_{k+1}=\Fmap(\state_k,\ctrl_k,\hstep)$, with $\sweeps{k,j}\ge\sweep{k,j}$, the buffer $\clr$ and the verification tolerance $\tauv$ of \eqref{eq:verify}, whereas Corollary~\ref{cor:shared} needs them at the end point $\flow{k}(\hstep)$ of the interval motion. Let $\rkerr{k}=\max_j\|\corner{j}(\flow{k}(\hstep))-\corner{j}(\state_{k+1})\|$ be the RK4 local error lifted to the corners. If $\rkerr{k}<\clr-\tauv$, the constraints imply \eqref{eq:shared} with $\mu'=\clr-\tauv-\rkerr{k}$. We do not bound $\rkerr{k}$ a priori, and the condition restricts the step: on random intervals within the bounds of Table~\ref{tab:params}, $\rkerr{k}$ reaches 0.12~mm for $\hstep=0.4$~s and 4.1~mm for $\hstep=0.8$~s. %
On the 818 trajectories returned in the benchmark, whose steps do not exceed 0.31~s, $\rkerr{k}\le0.0032$~mm, against $\clr-\tauv\approx2$~cm.
The executed motion starts interval $k$ not at $\state_k$ but at a state that carries the propagated RK4 errors of the earlier intervals. A Gronwall bound on this difference grows exponentially with the horizon and is useless at the horizons of Section~\ref{sec:exp}, so we measure it. Let $\delta$ be the largest distance between a corner on the executed motion and the same corner on the interval motion at the same time; the executed motion keeps a clearance of at least $\clr-\tauv-\max_k\rkerr{k}-\delta$. On the 818 trajectories, $\delta\le0.03$~mm, and the executed final state lies within 0.03~mm of $\state_\Nint$. %
Both numbers come from numerical integration with a much finer step, not from validated enclosures of the flow. A certificate for the executed motion itself would need an a-priori bound on $\rkerr{k}$ and on its propagation, or validated integration in the verification step.
\end{remark}

\subsection{Corridor Rebuild}\label{sec:t-rebuild}

\begin{lemma}[Manageability]\label{lem:manage}
Let $\dec$ be an accepted Phase-II iterate that is feasible for \eqref{eq:PF} with valid corridors. A corridor produced by the rebuild rule \eqref{eq:rebuild}, or a DA-ASC box substituted for an SH polygon at the start of Phase~II, is valid, $\dec$ remains feasible after the replacement, and $\cost(\dec)$ is unchanged.
\end{lemma}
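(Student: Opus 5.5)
The proof has three parts: validity of the new corridor, feasibility of $\dec$ after the replacement, and invariance of the cost. Validity comes from the construction. A DA-ASC box is kept only if its seed box meets no obstacle. Every growth step with back-off accepts a trial box only after a separating-axis test against every convex obstacle and against the map boundary. The final box is the last accepted trial, so $\Obs{m}\cap\operatorname{int}\Corr{k}=\emptyset$ for all $m$ and $\Corr{k}\subseteq\Map$. The separating-axis test is exact for convex polygons, and triangulation by ear clipping makes every obstacle convex. For an SH polygon, $\Corr{k}$ is contained in each half-plane whose boundary lies on the support line of an obstacle $\Obs{m}$ meeting $B_k$, with $\Obs{m}$ on the closed far side. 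Hence $\operatorname{int}\Corr{k}$ misses every such obstacle. Obstacles not meeting $B_k$ miss $\Corr{k}\subseteq B_k$ altogether. The four half-planes of the map boundary give $\Corr{k}\subseteq\Map$.

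For feasibility, only the constraints \eqref{eq:corr} of interval $k$ change; the dynamics, bounds and terminal box do not. The rebuild is attempted at the current iterate. The new corridor replaces the old one only when \eqref{eq:rebuild} holds, that is, $s_k^{\mathrm{new}}\ge s_k^{\mathrm{old}}+\sgain$. Since $\dec$ is feasible for \eqref{eq:PF}, $s_k^{\mathrm{old}}\ge0$, or $s_k^{\mathrm{old}}\ge-\tauv$ if the verification tolerance is taken into account. With $\sgain\ge0$ this gives $s_k^{\mathrm{new}}\ge s_k^{\mathrm{old}}$, so $\max_{j,q,i}\gcon_{k,j,q,i}\le0$ for the new rows, or at least no worse than before. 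If the candidate is rejected, the old corridor stays and nothing changes.

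The substitution at the start of Phase~II is not guarded by \eqref{eq:rebuild}, so it needs a separate argument. The seed box is by definition the smallest $\phi_k$-aligned box containing the eight corners inflated by $\sweeps{k,j}+\clr$. Each corner therefore satisfies every side row with the margin $\sweeps{k,j}+\clr$, which is exactly \eqref{eq:corr}. Growth only moves sides outward, so these inequalities persist in the final box. Finally, $\cost$ depends only on $(\Utraj,\tscale)$ by \eqref{eq:cost}, and corridors change neither, so the cost is unchanged.

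The main obstacle is the feasibility claim for rebuilds, because of the tolerance $\tauv$. Accepted iterates satisfy the constraints only up to $\tauv$, so feasibility must be read in that verified sense, or via $[d_i]_+$ as in Lemma~\ref{lem:tol}. I would state this explicitly and check that \eqref{eq:rebuild} with $\sgain\ge0$ preserves the same slack level. I would also check that the inflated corners used by the seed box use the same $\sweeps{k,j}$ and $\clr$ as \eqref{eq:corr}, which the paper asserts.
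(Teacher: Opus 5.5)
Your proposal is correct and follows the paper's own argument. Validity comes from the separating-axis test for DA-ASC boxes and from placing each SH half-plane on the obstacle's support line; a rebuilt corridor stays feasible because \eqref{eq:rebuild} gives $s_k^{\mathrm{new}}\ge s_k^{\mathrm{old}}+\sgain\ge\sgain>0$; the start-of-Phase-II substitution stays feasible because the seed box contains the inflated corners of both end poses; and the cost is unchanged because $\dec$ is not modified. (Your concern about $\tauv$ is unnecessary: the lemma assumes exact feasibility for \eqref{eq:PF}, and even with $s_k^{\mathrm{old}}\ge-\tauv$ the rule gives $s_k^{\mathrm{new}}\ge\sgain-\tauv>0$, since $\sgain>\tauv$ for the parameters of Table~\ref{tab:params}.)
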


A box is accepted only if the separating-axis test finds it clear of every convex obstacle and of the map boundary, and an SH polygon puts each obstacle that meets its bounding box in the closed complement of one of its half-planes. The rebuild rule accepts a corridor only if it raises the smallest slack of \eqref{eq:corr} at the current iterate by at least $\sgain>0$, and the substituted box contains the inflated corners of both end poses by construction.

\begin{lemma}[Finitely many rebuilds]\label{lem:finite}
Each corridor is rebuilt at most $\budget$ times in Phase~II, and at most $\budget$ times between two restarts of Phase~I, which is restarted at most twice. Hence the corridors change at most $\Nint\budget$ times in Phase~II and at most $3\Nint\budget$ times in Phase~I.
\end{lemma}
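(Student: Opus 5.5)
The proof is a counting argument that follows the rebuild rule of Section~\ref{sec:m-dasc} and the restart rule of Section~\ref{sec:m-phase1}.

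\emph{Per-interval counter.} The rule attaches to each interval $k$ a counter $n_k$ of performed rebuilds. We take $n_k$ to be incremented exactly when a new corridor replaces the old one, that is, when \eqref{eq:rebuild} holds. A rebuild may be attempted only if $n_k<\budget$. The counter is never decreased within a phase, so $n_k\le\budget$ throughout Phase~II. One wording point must be fixed first: does ``rebuilt'' count attempts or accepted replacements? Counting attempts gives the same bound, because attempts are also capped by $\budget$ and every change of $\Corr{k}$ is a successful attempt. The substitution of DA-ASC boxes for SH polygons at the start of Phase~II happens once per interval and is not a rebuild. We state it separately as at most one extra change per interval, or absorb it by letting the counter start after the substitution, which is how we read the statement.

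\emph{Phase~II total.} Summing $n_k\le\budget$ over the $\Nint$ intervals bounds the number of corridor changes in Phase~II by $\Nint\budget$.

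\emph{Phase~I.} Between two restarts the same counter argument applies, with counters reset at each restart. The restart rule allows at most two restarts, which splits Phase~I into at most three epochs, each with at most $\Nint\budget$ rebuilds. The wholesale rebuild at a restart is where a gap could open. We treat it as the initialisation of the next epoch, not as a rebuild under \eqref{eq:rebuild}, and this is consistent with the stated count of $3\Nint\budget$. If instead the wholesale rebuilds are counted, they contribute at most $2\Nint$ further changes, and the conclusion that the corridors change finitely often is unaffected.

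\emph{Main obstacle.} The only delicate point is to make precise which events count as rebuilds, namely the restart rebuilds and the Phase-II substitution. The rest follows directly from the hard cap $\budget$ in the rule. As a safeguard that does not depend on the budget, we note that \eqref{eq:rebuild} alone also limits consecutive rebuilds of one interval. Each accepted replacement raises the slack $s_k$ by at least $\sgain>0$ at the current iterate. The slack is bounded above, for example by the diameter of $\Map$, since corners stay in the map. This second argument would need the iterate to stay fixed, because $s_k$ varies between iterates, so we rely on the budget $\budget$ for the lemma itself.
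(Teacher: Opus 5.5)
Your proposal is correct and follows essentially the paper's argument: the lemma follows directly from the cap $\budget$ in the rebuild rule, summed over the $\Nint$ intervals and over at most three Phase-I epochs, with the restart rebuilds and the Phase-II box substitution left outside the count. Your closing remark that the slack gain $\sgain$ alone cannot replace the budget, because the slack is evaluated at a moving iterate, matches the paper's own comment, and your note that counting the restart rebuilds would add at most $2\Nint$ changes makes explicit a point the paper leaves implicit.
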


The budget cannot be replaced by an argument on the slack gain alone: the slack is evaluated at the moving iterate and can fall again between two rebuilds.

\subsection{Convergence of FP-SQP}\label{sec:t-conv}

After the last rebuild the corridors are fixed, and Phase~II solves \eqref{eq:PF}: minimise $\cost(\dec)$ over $\dec=(\Xtraj,\Utraj,\tscale)$ subject to $c(\dec)=0$ (RK4 dynamics \eqref{eq:model}, initial state) and $d(\dec)\le0$ (state, input and $\tscale$ bounds, terminal box $\Xf$, corridor constraints \eqref{eq:corr}). A1 and A3--A6 hold by construction or are enforced by the implementation, and A2 holds for the bounds of Table~\ref{tab:params}.
\begin{enumerate}
\item[A1)] \emph{Trust region.} The trust region $\ninf{\Dscale\step}\le\trr$ bounds every component of the step, with $\Dscale$ diagonal and positive. (Construction.)
\item[A2)] \emph{Compactness and smoothness.} $\Fmap$, $\corner{j}$ and $\sweeps{k,j}$ are twice continuously differentiable on the closed neighbourhood $\mathcal Z_\rho$, of some radius $\rho>0$, of the set defined by the state, input and $\tscale$ bounds, with $\steer_{\max}+\rho<\pi/2$ and $\tscale_{\min}-\rho>0$. The feasible set of \eqref{eq:PF} is bounded: the corridors lie in $\Map$, and the heading, which is not wrapped, starts at a fixed value and changes by at most $\tscale_{\max}\hzero\spd_{\max}\tan\steer_{\max}/\wb$ per interval. (Bounds of Table~\ref{tab:params}; $\sweeps{k,j}$ is smooth because every square root in \eqref{eq:epss} has a positive argument.)
\item[A3)] \emph{Bounded model Hessian.} The subproblem Hessians satisfy $\|\Hess_i\|\le c_H$. ($\Hess_i=\mathrm{diag}(2\tscale_i\hzero w)$ acts on the inputs only and $\tscale$ is bounded.)
\item[A4)] \emph{Feasible start.} The first Phase-II iterate is feasible for \eqref{eq:PF}. (Projection and verification at the end of Phase~I.)
\item[A5)] \emph{Rebuilds.} Corridors are rebuilt only after accepted steps and finitely often. (Lemmas~\ref{lem:manage} and~\ref{lem:finite}.)
\item[A6)] \emph{Adaptive restriction constant.} Every trial point is verified by \eqref{eq:verify}; after a failure, $\trr$ is halved, $\kQ$ is multiplied by $\beta_\kappa>1$, and the lower bound for later decreases of $\kQ$ is raised to the new value. (Verification step.)
\end{enumerate}
For fixed $\Nint$, the projection constant of Lemma~\ref{lem:proj} is finite, because the LQR gains and the closed-loop transition matrices $\trans(k,j)$ of the projection are bounded on the compact set of A2. If, in addition, $\|\trans(k,j)\|\le c_\Phi\lambda^{k-j}$ with $\lambda\in(0,1)$ along the iterates, the constant does not grow with $\Nint$. We have checked this decay only numerically, on random trajectories (\path{tests/theory/check_d6b_projection_bound.py}); the theorem does not use it.

\begin{theorem}[Feasible iterates and convergence]\label{thm:fpsqp}
Let A1--A6 hold and consider Phase~II of Algorithm~\ref{alg:main} in exact arithmetic, with either the quadratic restriction $\restr=\kQ\tlev^2$ or the bilinear restriction $\restr=\kQ\trr\tlev$ together with the sufficient-decrease rule of Section~\ref{sec:m-accept}, and with trial points rejected like failed verifications when the projection moves them by more than half the step, $\|\tilde\dec-(\bar\dec+\step)\|_2>\tfrac12\|\step\|_2$. Then:
\begin{enumerate}
\item[(i)] every accepted iterate satisfies $c(\dec^i)=0$ and $d(\dec^i)\le0$ for its current corridors; by Corollary~\ref{cor:shared} and Remark~\ref{rem:rk4}, if in addition $\rkerr{k}<\clr$ for every $k$, each interval motion of $\dec^i$ keeps the vehicle rectangle at distance at least $\clr-\rkerr{k}$ (with $\tauv=0$ in exact arithmetic) from every obstacle and from the map boundary, and the final state lies in $\Xf$;
\item[(ii)] $\cost(\dec^{i+1})<\cost(\dec^i)$ after every accepted step, and rebuilds leave $\cost$ unchanged;
\item[(iii)] the verification fails only finitely often, and every accumulation point of the accepted iterates is either a KKT point of \eqref{eq:PF} with the final corridors or a point at which the Mangasarian--Fromovitz constraint qualification (MFCQ) fails.
\end{enumerate}
\end{theorem}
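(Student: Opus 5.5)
We prove the three parts in order. Parts (i) and (ii) follow from the acceptance mechanics; part (iii) adapts the global convergence argument of Wright and Tenny~\cite{Wright2004FPSQP} to the tail of the iteration after the last rebuild.

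For (i) we use induction on the accepted iterates. A4 gives a feasible first iterate. An accepted step replaces $\bar\dec$ by the trial point $\tilde\dec$ of \eqref{eq:proj}, and this point satisfies $c(\tilde\dec)=0$ by construction: $\tilde\state_0=\state_{\mathrm{init}}$ and every $\tilde\state_{k+1}$ is defined through $\Fmap$. Its inequalities satisfy \eqref{eq:verify}, which in exact arithmetic with $\tauv=0$ gives $d(\tilde\dec)\le0$. A rebuild after acceptance keeps feasibility by Lemma~\ref{lem:manage}. The geometric statement then follows from Corollary~\ref{cor:shared} with $\mu'=\clr-\rkerr{k}$, using $\sweeps{k,j}\ge\sweep{k,j}$ and the triangle-inequality argument of Remark~\ref{rem:rk4}. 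The terminal box is one of the $d_i$, so the final state lies in $\Xf$.

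For (ii), acceptance requires $\ratio\ge0.1$ and a predicted decrease $-m^\star>0$, so $\cost(\dec^{i+1})\le\cost(\dec^i)-0.1(-m^\star)<\cost(\dec^i)$. In the bilinear mode positivity is guaranteed by $-m^\star\ge\nu\trr^2$. In the quadratic mode we need $-m^\star>0$ at any non-stationary accepted step. If $-m^\star=0$, then $\step=0$ is optimal in \eqref{eq:sub}, which is feasible by Lemma~\ref{lem:tol}; the projection then returns $\bar\dec$, and this case is treated as termination. Rebuilds change only $\CA{k},\Cb{k}$, and $\cost$ depends only on $(\Utraj,\tscale)$, so Lemma~\ref{lem:manage} gives the second claim.

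For (iii), Lemma~\ref{lem:finite} and A5 show that after finitely many accepted steps the corridors are fixed, so it suffices to analyse FP-SQP on the fixed problem \eqref{eq:PF}.

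\emph{Finitely many verification failures.} Lemma~\ref{lem:proj} together with the half-step rejection rule bounds $\|\tilde\dec-(\bar\dec+\step)\|\le c_p\|\step\|^2$ on the compact set $\mathcal Z_\rho$ of A2. Taylor's theorem with the $C^2$ bounds of A2 then gives
\[
d_i(\tilde\dec)\le d_i(\bar\dec)+\nabla d_i(\bar\dec)^\top\step+c_2\|\step\|^2 .
\]
With $\|\step\|\lesssim\tlev$ from A1, this is at most $[d_i(\bar\dec)]_+=0$ once $\kQ\ge\kQs$ for a uniform constant $\kQs$; in the bilinear mode one uses $\trr\ge\tlev$. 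A6 multiplies $\kQ$ by $\beta_\kappa$ at each failure and ratchets its lower bound, so only finitely many failures occur before $\kQ\ge\kQs$ holds permanently. Lemma~\ref{lem:ratchet} supplies exactly this bookkeeping.

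\emph{Stationarity of accumulation points.} Let $\dec^*$ be an accumulation point at which MFCQ holds, and suppose it is not KKT. From MFCQ we construct a direction giving a strictly negative linearised objective that is feasible for the linearised constraints with positive margin. The margin absorbs the restriction $\restr$ for small $\tlev$. This yields, in a neighbourhood of $\dec^*$, a bound $-m^\star\ge c\min(\trr,\trr_0)$ for $\trr$ below a threshold; in the bilinear mode we take $\trr$ small enough that $\kQ\trr\tlev$ is dominated and $\nu\trr^2$ is passed. A3 and the second-order projection error then give $\ratio\to1$ as $\trr\to0$, so the radius stays bounded away from zero along the subsequence. Since $\cost$ is bounded below on the bounded feasible set and decreases monotonically, $\sum(-m^\star)<\infty$, which contradicts a uniform decrease on infinitely many iterations.

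The main obstacle is making this last argument uniform: the restriction constant, the LQR gains and the radius must all be controlled on a neighbourhood of $\dec^*$ rather than at a single point. The plan is to rely on the compactness in A2 and on the eventual constancy of $\kQ$ from the previous step, following Wright and Tenny's Theorem~3.1 essentially verbatim once these constants are fixed.
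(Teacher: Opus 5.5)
Your route is the paper's. Parts (i) and (ii) come from the acceptance mechanics and Lemma~\ref{lem:manage}. For (iii), you show that rebuilds and verification failures stop, after which Phase~II is the Wright--Tenny method on \eqref{eq:PF}: asymptotic exactness comes from Lemma~\ref{lem:proj} plus the displacement test, and a Cauchy-type bound (Lemma~\ref{lem:descent}, with the sufficient-decrease rule in the bilinear mode) replaces their Lemma~3.3.

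The step your own argument does not establish is that failures are finite. Your Taylor estimate for $d_i(\tilde\dec)$ uses the derivative bounds of A2, which hold only on $\mathcal Z_\rho$. It also uses the quadratic displacement of Lemma~\ref{lem:proj}, which needs $\ninf{\step}\le1/\kproj$. That is why Lemma~\ref{lem:restrict} requires $\trr\le r_{\mathrm p}$ as well as $\kQ\ge\kQs$. After a failure the radius can grow back to $\trr_{\max}$, which is not tied to $r_{\mathrm p}$. For a long step the half-step rule still bounds the displacement, but nothing keeps $\tilde\dec$, or the segment from $\bar\dec+\step$ to it, inside $\mathcal Z_\rho$. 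So $\kQ\ge\kQs$ is not a threshold above which verification cannot fail.

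The missing idea is that a large restriction constant confines the step by itself. On any corridor row, $\restr\ge\kQ\tlev^2$ and the subproblem constraint give $\kQ\tlev^2\le c_s+Gc_D\tlev$. Here $c_s$ is the largest slack on the bounded feasible set, $G$ is a gradient bound, and $c_D=\|\Dscale^{-1}\|$. Hence $\tlev\to0$ uniformly as $\kQ\to\infty$. Beyond some $\kappa^\dagger\ge\kQs$, every subproblem-feasible step has $\tlev\le r_{\mathrm p}$ whatever $\trr$. The proof of Lemma~\ref{lem:restrict} then applies, and the trial point also passes the displacement test once $r_{\mathrm p}$ is reduced by a dimension factor. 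This confinement is the substance of Lemma~\ref{lem:ratchet}, not just bookkeeping. Citing the lemma closes the step, but your paraphrase of it would not.

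Your closing argument is also looser than it should be. A bound $-m^\star\ge c\min(\trr,\cdot)$ near $\dec^*$, together with $\ratio\to1$ as $\trr\to0$, does not by itself keep the radius bounded away from zero at the subsequence iterates, since it may have been cut at iterations far from $\dec^*$. The paper does not re-derive this part. It invokes Theorem~3.8 of Wright and Tenny, not~3.1, and observes that its proof uses the decrease bound only on a neighbourhood of an accumulation point, in the form \eqref{eq:cauchy}. Otherwise that proof uses only A1--A3, feasibility of the perturbed point (Lemma~\ref{lem:restrict}), and asymptotic exactness $\ninf{\tilde\dec-(\bar\dec+\step)}\le\phi(\ninf{\step})\ninf{\step}$ with $\phi(s)=\min(\tfrac12,(1+\bar K)\kproj s)$. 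The displacement test supplies the factor $\tfrac12$ for long steps. Once you have checked these on the tail, together with the upper bound on $\kQ$ that Lemma~\ref{lem:descent} needs, you can apply that theorem instead of rebuilding its proof.
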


\emph{Proof idea.} Algorithm~\ref{alg:main} is an instance of the feasibility-perturbed trust-region SQP method of~\cite{Wright2004FPSQP} once three ingredients are supplied. The LQR projection closes the dynamics exactly and moves the linearised step by $O(\ninf{\Dscale\step}^2)$ (Lemma~\ref{lem:proj}, Appendix), which gives the asymptotic-exactness condition of~\cite{Wright2004FPSQP}. The restriction term covers this displacement together with the second-order remainders of $\corner{j}$ and $\sweeps{k,j}$ once $\kQ$ exceeds a finite $\kQs$, so every projected trial point is feasible (Lemma~\ref{lem:restrict}); A6 raises $\kQ$ after every failure, and a large $\kQ$ confines the step to the region where Lemma~\ref{lem:restrict} applies whatever the radius, so the failures stop (Lemma~\ref{lem:ratchet}). A Cauchy-type decrease bound for the restricted subproblem near non-KKT points replaces Lemma~3.3 of~\cite{Wright2004FPSQP} (Lemma~\ref{lem:descent}). The remaining steps of the proof of Theorem~3.8 in~\cite{Wright2004FPSQP} use only these properties, A1 and A3, and carry over unchanged; the objective itself serves as merit function, so no penalty weight is needed. %

\begin{remark}[Scope of Theorem~\ref{thm:fpsqp}]\label{rem:scope}
The theorem covers Phase~II only, in exact arithmetic, with exact subproblem solutions and without an iteration limit. It concerns the RK4 model \eqref{eq:model} and its interval motions, and it says nothing about a vehicle that deviates from the model, which Section~\ref{sec:e-closedloop} examines in closed-loop simulation. We claim only Theorem~3.8 of~\cite{Wright2004FPSQP}; its error-bound assumption, which that paper uses only to show that a perturbed step exists, is replaced here by the explicit projection of Lemmas~\ref{lem:proj} and~\ref{lem:restrict}. The KKT points in (iii) are those of \eqref{eq:PF} with the final corridors; they need not be KKT points of \eqref{eq:P}, because a corridor row can be active where no obstacle is. With solver tolerance $\taus>0$ and $0<\tauv<\clr$, the right-hand-side relaxation of \eqref{eq:sub} keeps $\step=0$ feasible, and every accepted iterate satisfies \eqref{eq:corr} without the buffer with margin $\clr-\tauv$ (Lemma~\ref{lem:tol}), so (i) and (ii) hold in this form. We do not claim (iii) for positive tolerances: the tolerances control the constraint violation, not the stationarity residual, and the stopping test of Section~\ref{sec:m-accept} does not certify stationarity. The benchmark implementation counts, but does not reject, trial points that the projection moves by more than half the step; in 43 random cases rerun with this counter (3923 logged Phase-II iterations) it never fired. %
\end{remark}

\subsection{Phase I and \obcact}\label{sec:t-phase1}

Phase~I minimises $\infeas(\dec)=\|c(\dec)\|_1+\sum_i[d_i(\dec)]_+$ by a trust-region method on its linearisation.

\begin{proposition}[Phase I]\label{prop:phase1}
After the last corridor change of Phase~I, every accumulation point of the Phase-I iterates is a first-order critical point of $\infeas$. Either $\infeas=0$ there, and Phase~II can start after the projection and verification, or the point is a local infeasible critical point.
\end{proposition}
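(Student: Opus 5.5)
The plan is to treat Phase~I, after its last corridor change, as a standard trust-region method for minimising a composite nonsmooth function $\infeas(\dec)=h(F(\dec))$. Here $F(\dec)=(c(\dec),d(\dec))$ is $C^2$ on the compact set of A2, and $h(c,d)=\|c\|_1+\sum_i[d_i]_+$ is convex, polyhedral and globally Lipschitz. By Lemma~\ref{lem:finite}, the corridors change at most $3\Nint\budget$ times in Phase~I, since there are at most two restarts with budget $\budget$ each. The tail of the iteration is therefore a trust-region method applied to one fixed function $\infeas$, with $\tscale=1$ held fixed. For such problems, the global convergence theory of Fletcher, Yuan and Powell for composite nonsmooth optimisation applies: every accumulation point is a critical point, meaning $0\in\nabla F(\bar\dec)^\top\partial h(F(\bar\dec))$. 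Equivalently, the optimal value of the linearised model $h(F(\bar\dec)+\nabla F(\bar\dec)\step)$ on a trust region equals $h(F(\bar\dec))$ for all small radii.

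The key steps, in order, are as follows.

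\begin{enumerate}
\item Fix the iteration index after which the corridors no longer change, and note that restarts also stop after it.
\item Verify the hypotheses of the composite trust-region theorem.
\begin{enumerate}
\item Boundedness of the iterates follows from A2, since the corridors lie in $\Map$ and the heading increments are bounded; if needed, bounded trust regions around a bounded warm start can also be used.
\item $F$ is $C^1$ with a Lipschitz Jacobian on $\mathcal Z_\rho$.
\item The model is the linearisation $h(F+\nabla F\step)$ plus the small quadratic input regularisation, whose Hessian is bounded, as in A3.
\item The model decrease satisfies a Cauchy-type bound $\mathrm{pred}\ge c\,\min(\trr,\crit_\infeas)\,\crit_\infeas$.
\end{enumerate}
For the last point, I will use the standard criticality measure $\crit_\infeas(\dec)=h(F(\dec))-\min_{\ninf{\Dscale\step}\le1}h(F(\dec)+\nabla F(\dec)\step)$. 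Convexity of the model in $\step$ gives the Cauchy bound by scaling the unit-radius minimiser.
\item Run the usual argument.
\begin{enumerate}
\item The actual reduction differs from the predicted reduction by $O(\trr^2)$, using the Lipschitz continuity of $h$ and Taylor's theorem on $F$. Hence at non-critical points the steps are eventually successful and $\trr$ stays bounded away from zero.
\item $\infeas\ge0$ is monotone on accepted steps, so the sum of the decreases is finite.
\item Suppose an accumulation point $\bar\dec$ had $\crit_\infeas(\bar\dec)>0$. By continuity of $\crit_\infeas$, the bound $\crit_\infeas\ge\epsilon$ holds near $\bar\dec$, and this forces an infinite total decrease, which is a contradiction.
\end{enumerate}
\item Dichotomy. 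At a critical point $\bar\dec$, either $\infeas(\bar\dec)=0$, so $c(\bar\dec)=0$ and $d(\bar\dec)\le0$, or $\infeas(\bar\dec)>0$ while no first-order model step reduces it, which is the definition of a local infeasible critical point. In the first case, A4 is obtained via the projection \eqref{eq:proj} and the verification \eqref{eq:verify}. At an exactly feasible point, the projection with $\step=0$ returns the point itself, since $\tilde\state_{k+1}=\Fmap(\tilde\state_k,\bar\ctrl_k,\hzero)=\bar\state_{k+1}$. The verification then holds with slack $\tauv$. In practice the switch happens at $\infeas<10^{-7}$, and verification decides it.
\end{enumerate}

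The main obstacle is the nonsmoothness of $h$ and the additional quadratic regularisation. The subproblem is not exactly the pure linear-programming model, so I must check that the regularisation does not spoil the Cauchy decrease. I would treat the regularisation term as a bounded model Hessian, so that it changes the predicted decrease by at most $c_H\trr^2/2$, and absorb this into the standard analysis of models of the form $h(F+\nabla F\step)+\tfrac12\step^\top H\step$ (Yuan's result). A second subtlety is continuity of $\crit_\infeas$. This holds because it is the optimal value of a parametric convex program whose feasible set, the fixed unit box, does not depend on the parameter, and whose objective is continuous.
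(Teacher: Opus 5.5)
Your proposal is correct and follows the paper's argument. The paper also uses only the finiteness of corridor changes from Lemma~\ref{lem:finite}, then applies trust-region theory for composite nonsmooth problems to $\infeas$ as a convex function of the smooth map $(c,d)$; it cites Conn, Gould and Toint (Thms.~11.2.5 and~11.5.1), where you sketch the Fletcher--Yuan argument yourself.

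One justification should be dropped: A2 bounds only the feasible set of \eqref{eq:PF}, not the infeasible Phase-I iterates. Global boundedness is not needed, though, because your contradiction argument at an accumulation point is local.
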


$\infeas$ is a convex function of the smooth map $(c,d)$, so trust-region theory for composite nonsmooth problems applies~\cite[Thms.~11.2.5 and~11.5.1]{Conn2000TR}. %
Proposition~\ref{prop:phase1} does not guarantee success: a local infeasible critical point cannot be excluded from a single warm start, and in cluttered case 101 (Section~\ref{sec:e-main}) Phase~I stops with positive infeasibility after both restarts. A rebuild in Phase~I need not decrease $\infeas$, because an SH polygon is violated wherever the vehicle overlaps an obstacle; the proposition uses only the finiteness of Lemma~\ref{lem:finite}.

\obcact{} applies the construction of Corollary~\ref{cor:shared} to the exact dual formulation of \obca{}~\cite{Zhang2021OBCA}. Write the vehicle rectangle in body coordinates as $\Body=\{\xi:\Gamma\xi\le\gamma\}$ and obstacle $m$ as $\{y:A_my\le b_m\}$. For each interval $k$ and obstacle $m$, \obcact{} uses one multiplier $\lamo{m,k}\ge0$ for both end poses $q\in\{k,k+1\}$ and vehicle multipliers $\zetao_{m,k}^{q}\ge0$:
\begin{equation}\label{eq:obcact}
\begin{aligned}
&(A_m\pos_q-b_m)^\top\lamo{m,k}-\gamma^\top\zetao_{m,k}^{q}\ge\dmin+\sweep{k}(\betamax),\\
&\Gamma^\top\zetao_{m,k}^{q}+\Rot{\head_q}^\top A_m^\top\lamo{m,k}=0,\qquad \|A_m^\top\lamo{m,k}\|\le1,
\end{aligned}
\end{equation}
where $\betamax=(\max(\wb+\Lf,\Lr),\wid/2)$ collects the largest body coordinates, so that $\sweep{k}(\betamax)\ge\sweep{k,j}$ for all corners.

\begin{proposition}[\obcact]\label{prop:obcact}
If \eqref{eq:obcact} holds at $(\state_k,\ctrl_k,\flow{k}(\hstep))$ for obstacle $m$ and $\dmin>0$, the interval motion $\flow{k}$ keeps the vehicle rectangle at distance at least $\dmin$ from obstacle $m$. If it holds at the RK4 state $\state_{k+1}$ instead and $\rkerr{k}<\dmin$, the distance is at least $\dmin-\rkerr{k}$. The same holds for the map box when every corner of both end poses keeps a distance $\dmin+\sweep{k}(\betamax)$ from its sides.
\end{proposition}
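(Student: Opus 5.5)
The plan is to interpret \eqref{eq:obcact} as the OBCA duality certificate with one fixed separating direction per interval and obstacle, and then use Proposition~\ref{prop:sweep} to extend it from the end poses to the whole interval motion. Set $a=A_m^\top\lamo{m,k}$ and $b=b_m^\top\lamo{m,k}$, with $\|a\|\le1$. Every point $y\in\Obs{m}$ satisfies $A_my\le b_m$, and $\lamo{m,k}\ge0$, so $a^\top y\le b$. The half-plane $H=\{y:a^\top y\le b\}$ therefore contains the obstacle. This $H$ is the same for both end poses, because the multiplier $\lamo{m,k}$ is shared. That sharing is what plays the role of the shared corridor in Corollary~\ref{cor:shared}.

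Next, fix a pose $q$ and a body point $\xi\in\Body$, with world position $y'=\pos_q+\Rot{\head_q}\xi$. From the equality constraint, $a^\top\Rot{\head_q}\xi=-(\zetao_{m,k}^q)^\top\Gamma\xi\ge-\gamma^\top\zetao_{m,k}^q$, using $\zetao\ge0$ and $\Gamma\xi\le\gamma$. The inequality in \eqref{eq:obcact} then gives $a^\top y'-b\ge\dmin+\sweep{k}(\betamax)$. Applying this to the four corners $\corner{j}$ at $q=0$ and $q=\hstep$ shows that each corner lies in the half-plane $\{a^\top y-b\ge\dmin+\sweep{k}(\betamax)\}$. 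This set is convex, so the chord of each corner path lies in it as well.

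By Proposition~\ref{prop:sweep}, for every $\tau$ the corner $\corner{j}(\flow{k}(\tau))$ lies within $\sweep{k,j}\le\sweep{k}(\betamax)$ of its chord. Since $\|a\|\le1$, $a^\top y$ moves by at most $\|a\|\sweep{k,j}\le\sweep{k}(\betamax)$, so $a^\top\corner{j}(\flow{k}(\tau))-b\ge\dmin$. The inequality $\sweep{k}(\betamax)\ge\sweep{k,j}$ holds because $\Mb{k}$ is nondecreasing in $|\beta_x|$ and $|\beta_y|$, so it suffices to check this monotonicity.

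The rectangle at time $\tau$ is the convex hull of its corners, so every point $y'$ of it has $a^\top y'-b\ge\dmin$. For $y\in\Obs{m}$ we then get $\dmin\le a^\top(y'-y)\le\|y'-y\|$, again using $\|a\|\le1$. This bounds the distance from below by $\dmin$. The point of caution here is the case $\|a\|<1$: the argument above goes through unchanged, with no need to normalise $a$. The case $a=0$ cannot occur, since then $-b\ge\dmin>0$ together with $a^\top y\le b$ would force the obstacle to be empty.

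For the RK4 variant, the certificate holds at $\state_{k+1}$ rather than at $\flow{k}(\hstep)$. Each corner of the true end pose is within $\rkerr{k}$ of the corresponding certified corner, so its half-plane margin drops by at most $\|a\|\rkerr{k}\le\rkerr{k}$, exactly as in Remark~\ref{rem:rk4}. Repeating the chord and sweep argument gives distance at least $\dmin-\rkerr{k}$.

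For the map box, apply the same argument to each side separately. Each side is a half-plane with a unit normal, all corners of both end poses keep distance $\dmin+\sweep{k}(\betamax)$ from it, and convexity plus the sweep bound again yield $\dmin$. The main obstacle is getting the direction of the inequality right in the vehicle-multiplier step and checking the monotonicity of $\Mb{k}$. Everything else follows the proof of Corollary~\ref{cor:shared}.
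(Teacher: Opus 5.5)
Your proposal is correct and follows the paper's proof. The shared multiplier gives one half-plane with normal $w=A_m^\top\lamo{m,k}$, $\|w\|_2\le1$, that contains every corner of both end poses with margin $\dmin+\sweep{k}(\betamax)$; the chord argument with Proposition~\ref{prop:sweep} and the monotonicity of $\Mb{k}$ in $|\beta_x|,|\beta_y|$ reduces the margin to $\dmin$ along the motion, and $w^\top o\le\lamo{m,k}^\top b_m$ for obstacle points gives the separation. Your RK4 case, with the margin at the true end pose lowered by $\rkerr{k}$, spells out a step the paper leaves implicit, and it is valid because RK4 integrates $\spd$ and $\steer$ exactly, so $\sweep{k}(\betamax)$ is the same at $\state_{k+1}$ and at $\flow{k}(\hstep)$.
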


The vector $w=A_m^\top\lamo{m,k}$ defines one half-plane that contains the rectangle at both end poses and whose complement contains the obstacle, and the argument of Corollary~\ref{cor:shared} applies to it. With one multiplier per sample, as in \obca{}, the two poses may lie in different half-planes; without the sweep margin, the chord is certified but the arc is not. Sections~\ref{sec:e-main} and~\ref{sec:e-eps} report the two failure modes. In the runs of Section~\ref{sec:e-eps}, with $\dmin=1$~mm, the solved trajectories have $\rkerr{k}\le0.15$~mm, so the condition $\rkerr{k}<\dmin$ holds there. %

%% file: sections/experiments.tex
\section{Experiments}\label{sec:exp}

\subsection{Setup}\label{sec:e-setup}
\emph{Cases.} The benchmark contains 820 cases: 200 random cases in each of four scenario families and the 20 cases of TPCAP~\cite{Li2023TPCAP}. The families are reverse perpendicular parking in a row of parked cars, parallel parking between parked cars along a curb, angled parking in a row of cars at 45 to 60 degrees, and cluttered areas of 30~m $\times$ 20~m with random convex obstacles and random start and goal poses. Start and goal poses are resampled until the rectangle clears every obstacle; the cases are generated from fixed seeds and exported in the TPCAP file format. %

\emph{Common inputs.} All methods use the same vehicle, bounds, RK4 model, terminal box and front end: one Hybrid A* path per case and the same first warm start derived from it (Table~\ref{tab:params}). Our planner may retry Phase~I with two slower and finer warm starts from the same path; this happened in 22 of the 820 cases, and with the first warm start alone it solves 797 cases. %
The baselines receive only the first warm start.

\emph{Baselines.} All baselines are our re-implementations in the same framework; differences from the original descriptions are listed in the supplement. \obca{}~\cite{Zhang2021OBCA,Zhang2018HOBCA} imposes the exact dual collision constraints at every sample with minimum distance $\dmin=0.02$ or $0.10$~m, the map box as an additional constraint, and the obstacles within 8~m of the warm-start path. \obcact{} (Proposition~\ref{prop:obcact}) uses $\dmin=0.02$~m, matching our buffer $\clr$, with an IPOPT CPU-time limit of 60~s; cases that fail are rerun with a limit of 3600~s.
\stc{} is a vertex-in-box variant of the safe travel corridors~\cite{Li2020STC,Li2022LIOM}: the original covers the body by two discs and boxes the disc centres against obstacles dilated by the disc radius, whereas \stc{} gives each of the four corners its own axis-aligned box per sample, built once around the warm start, with constraints at the samples only.
\corrnlp{} hands our corridors, sweep margin and buffer to IPOPT as one nonlinear program. \scvx{}~\cite{Mao2016SCvx,Mao2019SCvx,Luo2024Remarks} carries a virtual control with penalty weight $\wvirt=30$, chosen by a sweep ($\wvirt=1000$ solves 44 of 200 cases), on our corridors fixed after Phase~I; \scvxrb{} rebuilds them after every accepted iterate, in the spirit of STO~\cite{Yu2025STO}. %
The nonlinear programs are solved by IPOPT~\cite{Wachter2006IPOPT} through CasADi~\cite{Andersson2019CasADi} with MUMPS and a CPU-time limit of 300~s. %

\emph{Success.} A run succeeds if the planner returns a trajectory and an independent check confirms it is usable: each interval motion is re-integrated from the sampled state with 20 RK4 sub-steps, the vehicle rectangle at every sub-step is intersected with the original, possibly non-convex, obstacle polygons, and the corners are tested against the map box. The returned trajectory must also respect the bounds and the terminal box, and the baselines must satisfy the RK4 dynamics to $10^{-5}$; ours satisfies them by construction of the projection.
The check uses the original obstacles and shares no code with the corridors, the sweep margin or the linearisation. It samples the motion densely but does not bound it between sub-steps, so ``no penetration'' below means that none was detected at 20 sub-steps per interval. Times are optimisation times without the front end, which is common to all methods. The batch runs executed eight cases in parallel on one machine, an Apple M6 with 12 cores and 24~GB of memory under macOS~27.0, with Python~3.10.21, NumPy~2.2.6, SciPy~1.15.2, CasADi~3.8.1 (IPOPT~3.14.19 with MUMPS~5.8.2), HPIPM (commit bad1123) on BLASFEO (commit d3da74d), Clarabel~0.11.1 and Shapely~2.1.2, so times compare methods but are not absolute.

\begin{table}[t]
\caption{Vehicle and planner parameters used in all experiments}
\label{tab:params}
\centering\footnotesize
\begin{tabular}{@{}l@{\hspace{4pt}}p{0.76\columnwidth}@{}}
\toprule
Vehicle & $\wb=2.8$~m, $\Lf=0.96$~m, $\Lr=0.929$~m, $\wid=1.942$~m\\
Bounds & $|\spd|\le2.5$~m/s, $|\steer|\le0.75$~rad, $|\acc|\le1$~m/s$^2$, $|\srate|\le0.5$~rad/s\\
Objective & $w_T=1$, $(w_a,w_\omega)=(1,4)$, $\tscale\in[0.2,3]$\\
Terminal box & 0.1~m, 0.1~m, 0.05~rad, 0.05~m/s; $\steer$ free\\
Warm starts & $(v_{\mathrm{ref}},\text{time factor},\hzero)$ = $(0.8,1.5,0.4)$, $(0.6,1.8,0.3)$, $(0.5,2.2,0.25)$ [m/s, --, s], tried in this order; $\Nint\le320$\\
Corridors & $\clr=0.02$~m, $l_{\max}=4$~m, $\shR=3$~m, $r_{\mathrm{pos}}=0.5$~m, $\sact=0.1$~m, $\sgain=0.02$~m, $\budget=5$\\
Phase I & $\infeas<10^{-7}$, $\le100$ iterations, $\le2$ restarts\\
Phase II & $\trr_0=0.2$, $\trr_{\max}=2$, $\kQ^0=2$, $\beta_\kappa=2$, $\kQ\ge0.01$, $\nu=10^{-3}$, $\tauv=10^{-5}$, relative stop $10^{-5}$, $\le300$ iterations\\
Scaling $\Dscale^{-1}$ & $\state$: (1, 1, 0.2, 0.5, 0.2), $\ctrl$: (0.5, 0.25), $\tscale$: 0.2\\
Projection & $Q=\mathrm{diag}(10,10,10,1,1)$, $R=0.1I$\\
QP & HPIPM, tolerance $10^{-8}$, $\le60$ iterations; Clarabel fallback\\
\bottomrule
\end{tabular}
\end{table}

\begin{table*}[t]
\caption{Main comparison on 820 cases. Success requires the independent check to confirm a usable trajectory; ``pen.'' counts returned trajectories whose rectangle overlaps an obstacle. Times are optimisation times with eight runs in parallel. The $\Thor$ ratio is $\Thor$(ours)/$\Thor$(method), paired over the cases both solve.}
\label{tab:main}
\centering\footnotesize
\input{figs/table_main_paper.tex}

\end{table*}

\begin{figure*}[t]
\centering
\includegraphics[width=\textwidth]{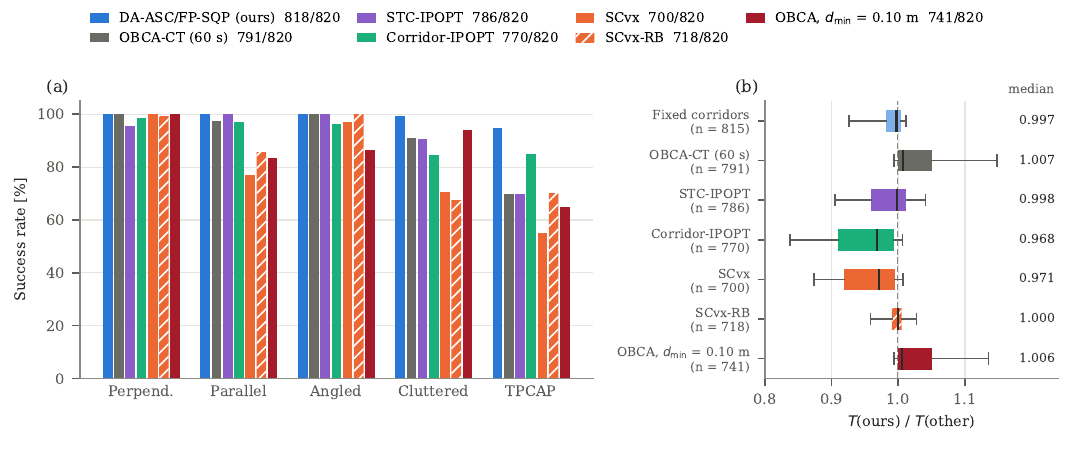}
\caption{Main comparison. (a) Success rate per scenario family (200 cases each, TPCAP 20). (b) Paired ratio $\Thor$(ours)/$\Thor$(method) on the cases both methods solve (boxes: quartiles; whiskers: 10th and 90th percentiles); values below one mean that our manoeuvre is shorter. Our planner has the highest success rate, or shares it, in every family; the manoeuvres of \obcact{} are 0.7\% shorter in the median.}
\label{fig:main}
\end{figure*}

\subsection{Main Comparison}\label{sec:e-main}
Table~\ref{tab:main} and Fig.~\ref{fig:main} summarise the benchmark. Our planner succeeds in 818 of 820 cases with no penetration; the smallest clearance of a returned trajectory is 2.27~cm, above the buffer $\clr=2$~cm. \obcact{}, the certified variant of \obca{}, succeeds in 791 cases within 60~s and in 809 with the 3600~s limit; its 29 reruns take a median of 195~s and up to 3913~s of wall time. Every case solved by \obcact{} within 60~s is also solved by our planner, and 27 cases only by ours (exact McNemar test, $p=1.5\times10^{-8}$); against the 3600-s runs the count is 10 to 1 ($p=0.012$). These counts compare complete configurations: our planner may retry Phase~I with two further warm starts, and \obcact{} receives one rerun with a longer limit. With the first warm start alone, our planner solves 797 cases, and against \obcact{} within 60~s the count is 27 to 21 ($p=0.47$); against the 3600-s runs it is 10 to 22 ($p=0.050$). The difference in success therefore comes from the retries as much as from the method, and our experiments do not isolate the contribution of FP-SQP. %

Methods that constrain only the samples return penetrating trajectories: \obca{} in 76 cases with $\dmin=0.10$~m and in 345 with $\dmin=0.02$~m, and \stc{} in 16 cases. A larger $\dmin$ reduces the penetrations but does not remove them. For \stc{}, each corner stays in its own box at the samples, but the rectangle between the corners and the motion between the samples are not constrained.

The solver matters as much as the corridors. With our corridors, sweep margin and buffer, \corrnlp{} solves 770 cases, \scvx{} 700 and \scvxrb{} 718; none of them solves a case that our planner does not solve. In their failed runs the solver stops without a point that satisfies the constraints; unlike ours, their iterates need not satisfy the constraints before convergence, and Section~\ref{sec:e-anytime} shows what this means for interrupted runs. Rebuilding the corridors at every iterate, as \scvxrb{} does, raises the success of \scvx{} in the parallel, angled and TPCAP families, lowers it in the cluttered family, and increases its median time from 2.4 to 17.2~s. %

Our manoeuvres take 0.7\% longer than those of \obcact{} in the median (791 pairs), and longer by more than 1\% in 353 pairs; the difference is largest for parallel parking, where the median ratio is 1.097 over 195 pairs. Against \corrnlp{} and \scvx{} our manoeuvres are 3.2\% and 2.9\% shorter in the median. The median optimisation time of our Python implementation is 5.8~s with a 90th percentile of 26.9~s; it does not differ significantly from that of \obcact{} (Wilcoxon signed-rank test, $p=0.61$) and is 2.4 to 3.8 times that of \stc{}, \corrnlp{}, \scvx{} and \obca{}. %

Both of our failures are outside Phase~II. In TPCAP case 7 the front end finds no path, so no method is given a warm start. In cluttered case 101, with sliver obstacles near the start and the goal, the corridors that our construction builds around the warm start leave Phase~I no feasible point, and it stops with positive infeasibility after both restarts. This is a failure of the construction and of the restoration, not evidence that no admissible corridor exists; only \obcact{} with the 3600-s limit solves this case, after 530~s. %

\subsection{Trajectories}\label{sec:e-traj}
\begin{figure*}[t]
\centering
\includegraphics[width=0.92\textwidth]{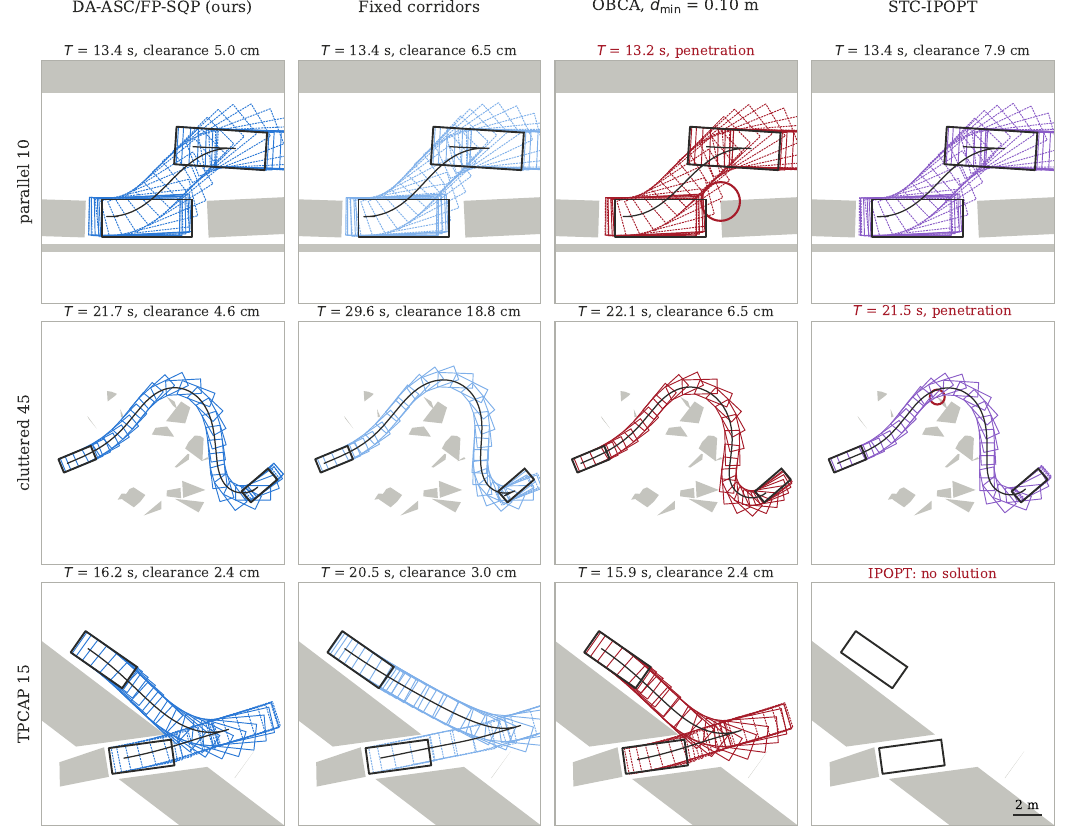}
\caption{Trajectories of four planners in three cases (vehicle outlines at every few samples; dashed: reverse gear; red: overlap with an obstacle found by the independent check). Parallel 10: \obca{} with $\dmin=0.10$~m clips the parked car between two samples. Cluttered 45: along almost the same path, the manoeuvre takes 29.6~s with fixed corridors and 21.7~s with rebuilding; \stc{} penetrates an obstacle. TPCAP 15: rebuilding shortens the manoeuvre from 20.5 to 16.2~s; \stc{} finds no solution.}
\label{fig:traj}
\end{figure*}
Fig.~\ref{fig:traj} shows three cases. In parallel case 10 all planners find nearly the same manoeuvre ($\Thor$ between 13.2 and 13.4~s), but \obca{} with $\dmin=0.10$~m lets a corner cut into the parked car between two samples. Cases cluttered 45 and TPCAP 15 show what rebuilding buys when it matters: with the corridors fixed after Phase~I, the same manoeuvre takes 36\% and 27\% longer. Such cases are a minority. %
Over all 815 cases solved with and without rebuilding, rebuilding shortens the manoeuvre by 0.3\% in the median, by more than 1\% in 274 cases and by at least 7.4\% in the best tenth, lengthens it by more than 1\% in 104 cases, and increases the median time by 18\%. It does not change the success rate significantly (3 cases to 0, $p=0.25$). %

\subsection{Certified Anytime Behaviour}\label{sec:e-anytime}
\begin{figure*}[t]
\centering
\includegraphics[width=\textwidth]{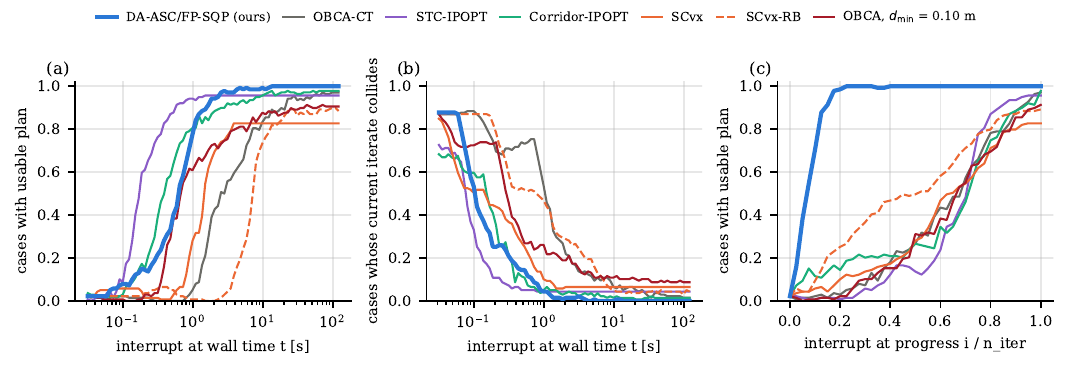}
\caption{Interrupting the planners (139 cases). (a) Share of cases whose current iterate is usable if the run is stopped at wall time $t$; (b) share whose current iterate collides; (c) as (a) against normalised progress. None of the evaluated Phase-II iterates of our planner is unusable.}
\label{fig:anytime}
\end{figure*}
To test Theorem~\ref{thm:fpsqp}(i) we recorded every iterate of every method on 139 cases (30 per random family and the 19 TPCAP cases with a front-end path). Runs with more than 100 iterates were evaluated at 100 of them, spaced both logarithmically and uniformly over the run; each evaluated iterate was executed from the start state, that is, along its executed motion, with ten RK4 sub-steps per interval (Fig.~\ref{fig:anytime}). At every sub-step the rectangle was tested against the obstacles by the separating-axis test of our corridor module, which the certificate does not use but which shares code with the corridor construction; the bounds and the terminal box were checked with a tolerance of $10^{-3}$, and containment in the map box was not checked in this experiment. For our planner this gives 10\,672 evaluated iterates out of 17\,704, and none of the 7103 evaluated Phase-II iterates was unusable; this is a sample of the Phase-II iterates, not a check of all of them.
Its unusable iterates, 1.8\% of those after the first usable one, all occur in Phase~I, where no guarantee is claimed. For the other methods, 11 to 30\% of the evaluated iterates after the first usable one are unusable again, so a run interrupted late can return a worse plan than one interrupted early. The first usable iterate appears after a median of 0.52~s for our planner, later than for \stc{} (0.14~s) and \corrnlp{} (0.25~s); what differs is not how early a usable plan appears but whether it stays usable. %

\subsection{Computational Cost}\label{sec:e-cost}
\begin{figure}[t]
\centering
\includegraphics[width=\columnwidth]{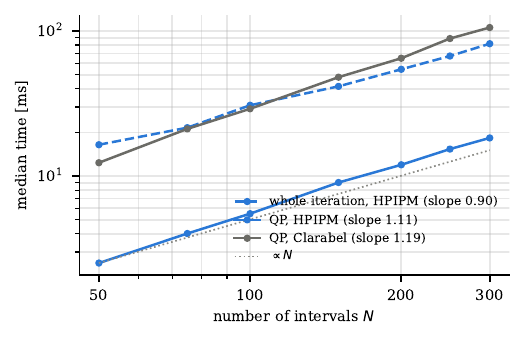}
\caption{Median time of one Phase-II QP and of one whole Phase-II iteration against the number of intervals $\Nint$ (eight cases, 15 iterations each, one thread). HPIPM grows linearly with $\Nint$; Clarabel on the same QP is five to six times slower.}
\label{fig:cost}
\end{figure}
Fig.~\ref{fig:cost} measures the cost per Phase-II iteration for $\Nint$ between 50 and 300. The time per interior-point iteration of HPIPM grows with slope 1.00 in the log--log plot, from 0.20 to 1.18~ms; one QP takes 2.5 to 18.3~ms and one whole iteration, including linearisation, projection, verification and corridor rebuild, 16.5 to 81.8~ms. Clarabel solves the same QPs 4.9 to 5.8 times more slowly. Across the benchmark, Phase~II needs a median of 71 iterations; 35 of the 818 solved cases stop at the limit of 300 iterations, with a feasible and usable trajectory that has not met the stopping test, and 24.5\% of the QPs fall back to Clarabel. The QP takes 15 to 22\% of a Phase-II iteration; the rest is linearisation, projection, verification and corridor rebuild, implemented in Python. %

\subsection{Corridor and Margin Ablation}\label{sec:e-ablation}
\begin{table}[t]
\caption{Ablation on 200 cases (seeds 0--49 of the four random families). Clearance: smallest exact clearance over all returned trajectories. $\Thor$ ratio: $\Thor$(full)/$\Thor$(variant), paired median. Iterations: median Phase-II iterations.}
\label{tab:ablation}
\centering\footnotesize\setlength{\tabcolsep}{3pt}
\resizebox{\columnwidth}{!}{\input{figs/table_ablation_paper.tex}}
\end{table}
Table~\ref{tab:ablation} removes one component at a time. Without rebuilding, and with axis-aligned instead of heading-aligned boxes, every case is still solved: where an aligned seed box does not fit, an SH polygon takes over. Both variants give slightly longer manoeuvres (median ratios 0.996 and 0.994). With the box-only Phase~I used before the SH polygons were introduced, axis-aligned boxes solved only 126 of the 200 cases; we attribute this to the seed box, which for an obliquely parked car is much larger than the car and rarely fits. Never decreasing $\kQ$ keeps every case solved but raises the median number of iterations from 71 to 169 and makes parallel parking hit the iteration limit in the median. %

The sweep margin and the buffer are the components the certificate rests on. Removing both makes 6 of the 200 trajectories penetrate an obstacle; each returned trajectory satisfies its corridor constraints at the samples, and the penetration happens between them, where the SH polygons and the grown boxes can touch an obstacle. Removing only the sweep margin leaves no penetration, but the smallest clearance drops to 1.78~cm, below the buffer, because the buffer now absorbs the sweep; removing only the buffer leaves 0.44~cm. Neither margin changes the median $\Thor$ by more than 0.2\%. %

\subsection{Sweep-Margin Counterexample}\label{sec:e-eps}
\begin{figure}[t]
\centering
\includegraphics[width=\columnwidth]{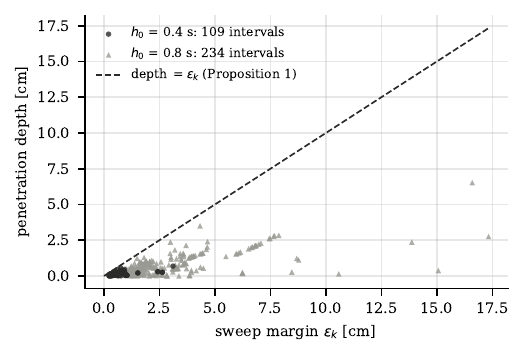}
\caption{\obcact{} with the sweep margin switched off: penetration depth of every penetrating interval against its margin $\sweep{k}$ (343 intervals, two grids). All points lie below the bound of Proposition~\ref{prop:sweep}; with the margin switched on, none of the 479 trajectories penetrates.}
\label{fig:eps}
\end{figure}
Our planner rarely drives the corridor constraints to equality at a turn, so the ablation above understates what the sweep margin does. \obcact{} with a small minimum distance ($\dmin=1$~mm) makes a sharper test: its shared hyperplane certifies the chord of every corner, and only the arc-to-chord deviation is left. We solved 250 random cases (five families including L-shaped narrow passages) with and without $\sweep{k}$ on two grids. Without the margin, 88 of 247 trajectories on the 0.4-s grid and 151 of 249 on the 0.8-s grid penetrate an obstacle, by up to 0.69 and 6.52~cm; every penetration occurs between samples and is smaller than $\sweep{k}$ (Fig.~\ref{fig:eps}), with a median depth-to-margin ratio of 0.19 and 0.31 and a maximum of 0.83. With the margin, none of the 479 trajectories penetrates. %
The median cost of the margin in $\Thor$ is 0.02\% and 0.08\% on the two grids; 12 and 19 of 239 pairs become more than 1\% longer, mostly in parallel parking. Re-solving these pairs from each other's solutions closes the gap in only 5 of 40 pairs. The remaining differences are consistent with a cost of the certificate in tight slots, but IPOPT returns local solutions, and the cross warm starts we tried cannot exclude better ones. %

\subsection{Warm-Start Robustness}\label{sec:e-init}
\begin{figure}[t]
\centering
\includegraphics[width=\columnwidth]{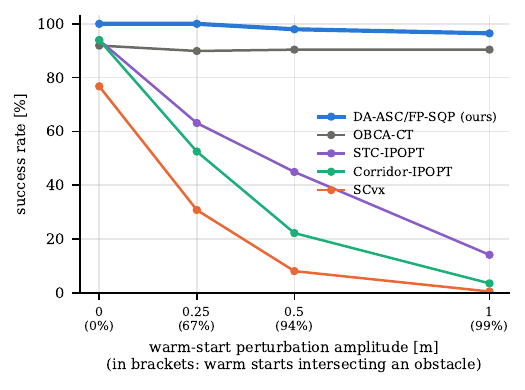}
\caption{Success rate when the front-end path is perturbed laterally (99 cases, two draws per amplitude). The share of perturbed warm starts that intersect an obstacle is given in brackets. Our planner keeps 96.5\% at 1.0~m, where 99\% of the warm starts are in collision.}
\label{fig:init}
\end{figure}
A two-stage planner depends on its warm start, and the corridor-based methods depend on it most, because their corridors are built around it. We perturbed the front-end path of 99 cases (20 per random family and the 19 TPCAP cases with a path) by smooth lateral offsets of amplitude 0.25, 0.5 and 1.0~m that vanish at the start, the goal and every cusp, with two draws per amplitude; 67\%, 94\% and 99\% of the perturbed warm starts intersect an obstacle. Fig.~\ref{fig:init} shows the success rates. Our planner solves 100\%, 98.0\% and 96.5\% of the perturbed cases, \obcact{} about 90\% at every amplitude, and the other corridor-based methods fall to between 0.5\% and 14.1\% at 1.0~m. The SH polygons are responsible: with the box-only Phase~I, which cannot place a box around a warm start in collision, our planner solved 43.9\% at 1.0~m. %

\subsection{Closed-Loop Simulation}\label{sec:e-closedloop}
\begin{table*}[t]
\caption{Closed-loop simulation on 219 cases. Error columns: plans with $\clr=2$~cm; cross-track and clearance loss (planned minus realised clearance) as 95th percentile and maximum over runs. ``pen.'': cases with a penetration in at least one run, for plans with the buffer $\clr$ [cm] and, where marked, input bounds reduced to 90\%.}
\label{tab:closedloop}
\centering\footnotesize
\input{figs/table_exp14_paper.tex}

\end{table*}
Theorem~\ref{thm:fpsqp} holds for the model. To see how much deviation from the model the buffer absorbs, we tracked the planned trajectories of 219 cases (the first 50 solved cases of each random family and the 19 solved TPCAP cases) in closed-loop simulation. The tracker applies the planned inputs with time-varying LQR feedback designed on the planning model, running at up to 20~Hz, without delay compensation, integral action or filtering. The simulated vehicle differs from the model by one factor at a time or by a combination of severity $s\in[0,1]$: wheelbase error, actuation delay, actuator limits below the planning bounds, first-order steering lag and white localisation noise. We replanned the same cases with $\clr=5$ and 10~cm and with the input bounds reduced to 90\%, all 219 of them successfully. Table~\ref{tab:closedloop} gives the results; the full set of settings and a figure are in the supplement. %

Without mismatch the tracker follows the plans to within 0.1~mm and no case penetrates. The benchmark buffer of 2~cm does not absorb realistic tracking errors: under single-factor mismatch between 1 and 32 of the 219 cases penetrate, and outside the two hardest settings every penetrating case has a planned clearance of at most 6.0~cm (median 2.9~cm). The geometry explains the sensitivity: the front corners lie 3.88~m from the rear axle, so a heading error of $0.3^\circ$ moves them by 2~cm. With $\clr=10$~cm no case penetrates for a wheelbase error between $-5\%$ and $+2.5\%$, delays up to 100~ms, actuator limits down to 90\%, noise up to 5~cm and $0.5^\circ$, steering lag up to 0.1~s and combined severity 0.25, at a median cost of 0.4\% in $\Thor$ (90th percentile 4.2\%). %
A larger buffer does not help when the actuators cannot deliver the planned inputs: with actuator limits 20\% below the planning bounds, raising $\clr$ from 2 to 10~cm reduces the penetrating cases only from 32 to 28, whereas planning with 90\% of the input bounds reduces them to 12, and to 10 together with $\clr=10$~cm. That combination leaves 3 of 219 penetrating cases under the single-factor mismatches other than the actuator deficit (one with wheelbase $+5\%$, two with steering lag 0.2~s) and 1 at combined severity 0.5, at a median cost of 1.0\% in $\Thor$ (90th percentile 7.3\%). In the two most severe settings the tracker lost the reference in some runs (4 of 219 and 94 of 1095), and we report only penetration counts there. %
These numbers characterise one tracker, the listed mismatches and 219 cases. They are the basis on which $\clr$ and the input margin are chosen for execution, not a guarantee for a physical vehicle.

%% file: figs/table_main_paper.tex
\begin{tabular}{@{}lrrrrrrrrrr@{}}
\toprule
Method & \multicolumn{2}{c}{820 cases} & \multicolumn{5}{c}{success per family [\%]} & \multicolumn{2}{c}{time [s]} & $\Thor$ ratio\\
\cmidrule(lr){2-3}\cmidrule(lr){4-8}\cmidrule(lr){9-10}
 & success & pen. & perp. & par. & ang. & clut. & TPCAP & med. & P90 & med. (pairs)\\
\midrule
\ours{} (ours) & 818 & 0 & 100 & 100 & 100 & 99.5 & 95.0 & 5.8 & 26.9 & --\\
fixed corridors & 815 & 0 & 100 & 100 & 100 & 98.0 & 95.0 & 4.9 & 21.8 & 0.997 (815)\\
\obcact{}, 60~s & 791 & 0 & 100 & 97.5 & 100 & 91.0 & 70.0 & 6.0 & 22.6 & 1.007 (791)\\
\obcact{}, 3600~s & 809 & 0 & 100 & 98.0 & 100 & 98.0 & 85.0 & 6.1 & 27.0 & 1.007 (808)\\
\stc{} & 786 & 16 & 95.5 & 100 & 100 & 90.5 & 70.0 & 1.7 & 3.1 & 0.998 (786)\\
\corrnlp{} & 770 & 0 & 98.5 & 97.0 & 96.5 & 84.5 & 85.0 & 2.0 & 4.1 & 0.968 (770)\\
\scvx{} & 700 & 0 & 100 & 77.0 & 97.0 & 70.5 & 55.0 & 2.4 & 5.3 & 0.971 (700)\\
\scvxrb{} & 718 & 0 & 99.0 & 85.5 & 100 & 67.5 & 70.0 & 17.2 & 57.9 & 1.000 (718)\\
\obca{}, $\dmin=0.10$~m & 741 & 76 & 100 & 83.5 & 86.5 & 94.0 & 65.0 & 1.5 & 6.3 & 1.006 (741)\\
\obca{}, $\dmin=0.02$~m & 472 & 345 & 97.0 & 77.5 & 0.5 & 56.5 & 45.0 & 1.5 & 4.8 & 1.012 (472)\\
\bottomrule
\end{tabular}

%% file: figs/table_ablation_paper.tex
\begin{tabular}{@{}lrrrrr@{}}
\toprule
Variant & success & pen. & min.\ clear. [cm] & $\Thor$ ratio & iterations\\
\midrule
full planner & 200/200 & 0 & 2.43 & -- & 71\\
no rebuild (fixed corridors) & 200/200 & 0 & 2.48 & 0.996 & 61\\
axis-aligned boxes & 200/200 & 0 & 2.06 & 0.994 & 77\\
no sweep margin & 200/200 & 0 & 1.78 & 1.000 & 66\\
no clearance buffer ($\clr=0$) & 200/200 & 0 & 0.44 & 1.001 & 73\\
no sweep margin, $\clr=0$ & 194/200 & 6 & $<0$ & 1.002 & 70\\
$\kQ$ never decreased & 200/200 & 0 & 2.44 & 0.996 & 169\\
\bottomrule
\end{tabular}

%% file: figs/table_exp14_paper.tex
\begin{tabular}{lrrrrrrrrrr}
\toprule
Mismatch & \multicolumn{2}{c}{cross-track [cm]} & $e_\theta$ [deg] & \multicolumn{2}{c}{clearance loss [cm]} & $\mu=2$ & $\mu=5$ & $\mu=10$ & $\mu=2$, 90\% & $\mu=10$, 90\% \\
 & P95 & max & max & P95 & max & pen. & pen. & pen. & pen. & pen. \\
\midrule
none & 0.0 & 0.0 & 0.0 & 0.0 & 0.0 & 0 & 0 & 0 & 0 & 0 \\
wheelbase $-5\%$ & 3.8 & 4.8 & 1.2 & 3.4 & 3.6 & 2 & 0 & 0 & 1 & 0 \\
wheelbase $+5\%$ & 13.4 & 19.8 & 5.0 & 6.3 & 32.3 & 8 & 4 & 1 & 7 & 1 \\
delay 50 ms & 1.4 & 6.7 & 1.4 & 1.1 & 3.1 & 1 & 0 & 0 & 0 & 0 \\
delay 100 ms & 3.7 & 16.3 & 3.6 & 2.9 & 10.0 & 5 & 1 & 0 & 2 & 0 \\
actuator limits $-5\%$ & 4.4 & 14.0 & 5.4 & 1.5 & 4.4 & 1 & 0 & 0 & 0 & 0 \\
actuator limits $-10\%$ & 16.9 & 49.2 & 16.5 & 6.8 & 22.6 & 2 & 2 & 0 & 0 & 0 \\
actuator limits $-20\%$ & \multicolumn{5}{c}{--- (tracker lost the reference in 4/219 runs)$^\dagger$} & 32 & 28 & 28 & 12 & 10 \\
noise 2 cm, $0.5^\circ$ & 2.9 & 9.5 & 2.2 & 1.9 & 4.9 & 3 & 0 & 0 & 3 & 0 \\
noise 5 cm, $0.5^\circ$ & 7.9 & 15.9 & 4.0 & 3.6 & 19.9 & 14 & 1 & 0 & 9 & 0 \\
steering lag 0.1 s & 3.8 & 17.3 & 3.4 & 2.6 & 9.4 & 3 & 1 & 0 & 1 & 0 \\
steering lag 0.2 s & 16.2 & 50.6 & 10.9 & 7.8 & 39.0 & 18 & 12 & 7 & 12 & 2 \\
combined, $s=0.25$ & 5.5 & 26.9 & 5.4 & 2.3 & 14.9 & 2 & 2 & 0 & 2 & 0 \\
combined, $s=0.5$ & 22.1 & 62.7 & 11.7 & 13.0 & 44.0 & 17 & 13 & 9 & 7 & 1 \\
combined, $s=1$ & \multicolumn{5}{c}{--- (tracker lost the reference in 94/1095 runs)$^\dagger$} & 90 & 64 & 53 & 90 & 49 \\
\bottomrule
\end{tabular}

%% file: sections/discussion.tex
\section{Discussion and Limitations}\label{sec:discussion}
\emph{Sliver obstacles.} One of our two failures, cluttered case 101, has long, thin obstacles next to the start and the goal. Phase~I stops after 24~s without a feasible point: the heading-aligned box around the vehicle meets the slivers at a small angle, and the half-planes, chosen one obstacle at a time, leave too little room to turn. \obcact{} solves this case only in the rerun with the 3600-s limit, after 530~s, so a feasible manoeuvre exists; whether an admissible corridor sequence exists we do not know. Adding half-planes to the same polygon would only shrink it; choosing them jointly so as to enlarge the polygon where the vehicle turns, or a region-growing method such as FIRI~\cite{Wang2024FIRI}, are candidates for this case. The other failure, TPCAP case 7, is a front-end failure shared by all methods. %

\emph{Solve time.} Our Python prototype needs a median of 5.8~s per case with a 90th percentile of 26.9~s, which is slower than \stc{}, \corrnlp{} and \obca{} (medians 1.5 to 2.0~s, IPOPT through CasADi) and similar to \obcact{} (6.0~s). We claim only the linear per-iteration cost of Section~\ref{sec:e-cost}; whether an implementation in a compiled language reaches the cycle times of an online planner has not been tested. %

\emph{Corridor rebuild.} Rebuilding the corridors after accepted steps changes the median horizon by 0.3\% relative to corridors fixed after Phase~I (815 pairs). The gain is concentrated: 274 cases are more than 1\% shorter and the best 10\% at least 7.4\% shorter, while 104 cases are more than 1\% longer. We have not found a property of a case that predicts the gain before solving. %

\emph{Manoeuvre duration.} The manoeuvres of \obcact{} are 0.7\% shorter in duration $\Thor$ in the median over 791 pairs, and more than 1\% shorter in 353 of them. The gap is largest for parallel parking (median ratio 1.097 over 195 pairs), where the trajectories have several gear changes and short intervals; we expect, without having tested it, that the sweep margin and the shared corridor then take a larger share of the free space. A time scale per gear segment, instead of one $\tscale$ for the whole horizon, would let short segments use shorter steps and smaller margins. %

\emph{Retries and time limits.} The success counts of Table~\ref{tab:main} compare complete configurations. Our planner retries Phase~I with two further warm starts and \obcact{} receives a rerun with 3600~s; with the first warm start alone our planner is not significantly better than \obcact{} within 60~s ($p=0.47$), and \obcact{} with 3600~s solves more cases ($p=0.050$; Section~\ref{sec:e-main}). A comparison in which every method receives the same retries and time budget would separate the method from the retry policy; we have not run it.

\emph{Stopping and optimality.} The theorem is asymptotic and assumes exact arithmetic; the implementation stops on a small predicted decrease at the current radius, on a vanishing radius or after 300 iterations (35 of 818 cases), and none of these certifies stationarity (Section~\ref{sec:m-accept}). What every returned trajectory has is feasibility and verification. A normalised first-order measure evaluated at a fixed test radius would give a stopping test that is consistent with the theorem.

\emph{Executed motion.} The certificate covers the interval motions, each started at a sampled state. The motion obtained by executing the inputs from the start state stays within 0.03~mm of them on all 818 returned trajectories, but this is a measurement (Remark~\ref{rem:rk4}); an a-priori bound on the RK4 error and its propagation, or validated integration in the verification, would turn it into a guarantee.

\emph{Phase~I.} Theorem~\ref{thm:fpsqp} covers Phase~II only. Phase~I can stop at a point that is critical for the infeasibility measure without being feasible (Proposition~\ref{prop:phase1}); the restarts and the warm-start ladder make this rarer but do not remove it, and cluttered case 101 is one instance. Proposition~\ref{prop:phase1} is also an asymptotic statement about the idealised method; the implementation caps Phase~I at 100 iterations and two restarts.

\emph{Conservatism of the sweep margin.} On typical trajectories the margin is larger than needed: in the counterexample of Section~\ref{sec:e-eps} the median ratio of penetration depth to $\sweep{k}$ is 0.19 and 0.31 on the two grids, with a maximum of 0.83. The bound itself is nearly tight, with a ratio of 0.996 on sampled intervals (Section~\ref{sec:t-geom}), so a smaller margin needs a sharper model of the corner path, for example the directional remainder of Lemma~\ref{lem:dir}, rather than a smaller constant. %

\emph{Static obstacles.} The obstacles are static and known. Moving obstacles would need corridors in space and time and a sweep margin for the relative motion, which we have not derived.

\emph{Model and execution.} The guarantees hold for the kinematic bicycle model with RK4 transitions, and we tested execution only in closed-loop simulation, not on a vehicle. Under the simulated mismatches the benchmark buffer $\clr=2$~cm is not enough (1 to 32 of 219 penetrating cases per single factor), so for execution $\clr$ must be chosen from the tracking-error budget. A larger buffer does not replace an input margin: with $\clr=10$~cm, steering lag of 0.2~s still gives 7 and actuator limits 20\% below the planning bounds 28 penetrating cases of 219. The tracker, which has no delay compensation or integral action, lost the reference in 94 of 1095 runs at the most severe combined mismatch. A tracker that accounts for delay and lag, and experiments on a vehicle, are left for future work. %

%% file: sections/conclusion.tex
\section{Conclusion}\label{sec:conclusion}
We have presented a planner for low-speed nonholonomic motion among static obstacles whose Phase-II iterates are dynamically exact and collision-free between samples for the planning model. One convex corridor per interval, holding the vehicle corners at both ends and shrunk by a sweep margin, turns inter-sample safety into linear constraints; separating-hyperplane polygons restore feasibility from warm starts in collision; and a feasibility-perturbed trust-region SQP keeps every accepted iterate usable while the corridors are rebuilt, with limit points that are KKT points of the corridor-constrained problem or violate MFCQ. On 820 cases the planner solved 818 without penetration, none of the 7103 Phase-II iterates evaluated on 139 cases was unusable, and the same sweep margin made \obca{} certifiable between samples. %
Future work covers corridors for sliver obstacles, a time scale per gear segment, recursive feasibility in a receding-horizon setting, tracking with delay and lag compensation, and experiments on a vehicle.

%% file: sections/appendix_proofs.tex
\section{Proofs}\label{app:proofs}
\setcounter{lemma}{0}\renewcommand{\thelemma}{A.\arabic{lemma}}

Vectors carry the $\infty$-norm and matrices the induced norm; the trust region uses the scaled norm $\ninf{\Dscale\,\cdot}$. $\bar\dec$ denotes a feasible Phase-II iterate with fixed corridors, $(\step,\tlev)$ a feasible point of the subproblem \eqref{eq:sub} at $\bar\dec$, and $\tilde\dec$ the trial point returned by the projection \eqref{eq:proj}.

\subsection{Proof of Proposition~\ref{prop:sweep}}
Write $e=\ehead{\head}$, $e^\perp=\eperp{\head}$ and $J$ for the rotation by $\pi/2$. With $\dot\head=\spd\tan\steer/\wb$ and $\ddot\head=\acc\tan\steer/\wb+\spd\srate\sec^2\steer/\wb$, differentiating $\corner{\beta}=\pos+\Rot{\head}\beta$ twice gives
\begin{equation*}
\ddot{\corner{\beta}}=(\acc-\ddot\head\beta_y-\dot\head^2\beta_x)\,e+(\spd\dot\head+\ddot\head\beta_x-\dot\head^2\beta_y)\,e^\perp .
\end{equation*}
Under a constant input, $\spd$ and $\steer$ are affine on the interval, so $|\spd|\le V_k$, and $|\tan\steer|$ and $\sec^2\steer$, which increase with $|\steer|$ on $(-\pi/2,\pi/2)$, are bounded by their values at the end points. Hence $\dot\head^2\le\Theta_{1,k}$, $|\ddot\head|\le\Theta_{2,k}$, $|\spd\dot\head|\le V_k^2K_k$ and $\|\ddot{\corner{\beta}}\|_2\le\Mb{k}(\beta)$ by the triangle inequality in each orthogonal component. Let $\ell(\tau)$ be the linear interpolation of $\corner{\beta}$ between $\tau=0$ and $\tau=\hstep$. For any unit vector $n$, the scalar function $n^\top(\corner{\beta}-\ell)$ vanishes at both ends and has second derivative bounded by $\Mb{k}(\beta)$, so $|n^\top(\corner{\beta}(\tau)-\ell(\tau))|\le\Mb{k}(\beta)\tau(\hstep-\tau)/2\le\sweep{k}(\beta)$. Choosing $n$ along $\corner{\beta}(\tau)-\ell(\tau)$ bounds the Euclidean distance to $\ell(\tau)$, which lies on the segment. \hfill$\square$

\subsection{Proof of Corollary~\ref{cor:shared}}
Fix $\tau$ and a corner $j$, and let $\ell_j(\tau)$ be the point of the chord given by Proposition~\ref{prop:sweep}. The end points of the chord satisfy $\CA{k}y\le\Cb{k}-(\sweep{k,j}+\mu')\mathbf 1$, a convex set, so $\ell_j(\tau)$ does too; since the rows of $\CA{k}$ have unit norm and $\|\corner{j}(\flow{k}(\tau))-\ell_j(\tau)\|_2\le\sweep{k,j}$, the corner satisfies $\CA{k}\corner{j}\le\Cb{k}-\mu'\mathbf 1$. The vehicle rectangle at time $\tau$ is the convex hull of its four corners, so every point $y$ of it satisfies the same inequality, and the ball of radius $\mu'$ around $y$ lies in $\Corr{k}\subseteq\Map$. An obstacle point $o$ does not lie in $\operatorname{int}\Corr{k}$, so some row satisfies $a_i^\top o\ge b_i$ and $\|y-o\|_2\ge a_i^\top(o-y)\ge\mu'$. \hfill$\square$

\subsection{Restriction, projection and adaptive constant}

\begin{lemma}[Projection]\label{lem:proj}
Let $\bar\dec$ be feasible, let $\Klqr{k}$ be the gains of the projection, $\bar K=\max_k\|\Klqr{k}\|$, and let $\Ltwo$ bound the second derivative of $\Fmap$ on the set $\mathcal Z_\rho$ of A2. Define $e_k=\tilde\state_k-(\bar\state_k+\step\state_k)$. Then $e_0=0$ and
\begin{equation}\label{eq:err}
e_{k+1}=(A_k-B_k\Klqr{k})e_k+R_k ,
\end{equation}
where $R_k$ is the second-order Taylor remainder of $\Fmap$ at $(\bar\state_k,\bar\ctrl_k,\bar\tscale)$ evaluated at $(\tilde\state_k-\bar\state_k,\tilde\ctrl_k-\bar\ctrl_k,\step\tscale)$. With $\tsum=\max_k\sum_{j<k}\|\trans(k,j+1)\|$ and $\kproj=\tfrac12\Ltwo(2+\bar K)^2\tsum$, every step with $\ninf{\step}\le1/\kproj$ satisfies $\max_k\|e_k\|\le\kproj\ninf{\step}^2$ and $\ninf{\tilde\dec-(\bar\dec+\step)}\le(1+\bar K)\kproj\ninf{\step}^2$. $\tsum$ is finite for fixed $\Nint$; if $\|\trans(k,j)\|\le c_\Phi\lambda^{k-j}$ with $\lambda<1$, then $\tsum\le c_\Phi/(1-\lambda)$ independently of $\Nint$.
\end{lemma}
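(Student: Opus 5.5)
We subtract the linearised dynamics from the projected dynamics and work with the error recursion. By the feasibility of $\bar\dec$ we have $\bar\state_{k+1}=\Fmap(\bar\state_k,\bar\ctrl_k,\bar\tscale\hzero)$. The step satisfies the linearised constraint $\step\state_{k+1}=A_k\step\state_k+B_k\step\ctrl_k+C_k\step\tscale$, and the projection gives $\tilde\state_{k+1}=\Fmap(\tilde\state_k,\tilde\ctrl_k,(\bar\tscale+\step\tscale)\hzero)$. Expand $\Fmap$ to first order at $(\bar\state_k,\bar\ctrl_k,\bar\tscale)$ with increments $(\tilde\state_k-\bar\state_k,\tilde\ctrl_k-\bar\ctrl_k,\step\tscale)$ and remainder $R_k$. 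Substituting $\tilde\state_k-\bar\state_k=\step\state_k+e_k$ and $\tilde\ctrl_k-\bar\ctrl_k=\step\ctrl_k-\Klqr{k}e_k$ from \eqref{eq:proj}, the $\step$ terms cancel against the linearised dynamics. This leaves exactly \eqref{eq:err}, and $e_0=0$ because $\tilde\state_0=\state_{\mathrm{init}}=\bar\state_0$ and $\step\state_0=0$.

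Unrolling \eqref{eq:err} with $\trans(k,j)$ the closed-loop transition matrix gives $e_k=\sum_{j<k}\trans(k,j+1)R_j$, so $\max_k\|e_k\|\le\tsum\max_j\|R_j\|$. By Taylor's theorem on $\mathcal Z_\rho$, $\|R_j\|\le\tfrac12\Ltwo\|w_j\|^2$, where $w_j$ is the increment vector. Its components are bounded by $\|\step\state_j\|+\|e_j\|$, $\|\step\ctrl_j\|+\bar K\|e_j\|$ and $|\step\tscale|$. Hence $\|w_j\|\le\ninf{\step}+(1+\bar K)E$, where $E=\max_k\|e_k\|$. This yields the implicit inequality
\begin{equation*}
E\le\tfrac12\Ltwo\tsum\big(\ninf{\step}+(1+\bar K)E\big)^2 .
\end{equation*}

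Closing this inequality is the main obstacle. We argue by induction on $k$, or by continuity in the step size, that $\|e_j\|\le\ninf{\step}$ for all $j$ as long as $\ninf{\step}\le1/\kproj$. Under that hypothesis $\|w_j\|\le(2+\bar K)\ninf{\step}$. Then $\|e_k\|\le\tfrac12\Ltwo(2+\bar K)^2\tsum\ninf{\step}^2=\kproj\ninf{\step}^2\le\ninf{\step}$, which propagates the hypothesis to $k$ and gives the first bound. We also need the increments to stay in $\mathcal Z_\rho$ so that $\Ltwo$ applies. This holds for small steps, since $\|w_j\|\le(2+\bar K)/\kproj$. If needed, we absorb the condition $(2+\bar K)/\kproj\le\rho$ into the definition of $\Ltwo$ or into A2's neighbourhood; this is the one point I would check carefully.

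The second bound follows componentwise. The state deviation is $e_k$, and the input deviation is $\tilde\ctrl_k-\bar\ctrl_k-\step\ctrl_k=-\Klqr{k}e_k$. The $\tscale$ component is unchanged. So $\ninf{\tilde\dec-(\bar\dec+\step)}\le(1+\bar K)\max_k\|e_k\|$. Finiteness of $\tsum$ for fixed $\Nint$ holds because it is a finite sum of bounded matrices. Under geometric decay, $\sum_{j<k}c_\Phi\lambda^{k-j-1}\le c_\Phi/(1-\lambda)$, uniformly in $\Nint$.
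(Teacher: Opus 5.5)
Your proposal is correct and follows the paper's proof: the same Taylor substitution yields the closed-loop recursion, which is unrolled through $\trans(k,j+1)$ and closed by induction on $\|e_j\|\le\ninf{\step}$ with the increment bounded by $(2+\bar K)\ninf{\step}$. The paper's proof leaves implicit the point you flag, that the Taylor segments must stay in $\mathcal Z_\rho$ (it only enters later through the choice of $r_{\mathrm p}$ in Lemma~\ref{lem:restrict}); enlarging $\Ltwo$ so that $(2+\bar K)/\kproj\le\rho$, as you suggest, is a legitimate way to make it explicit.
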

\begin{proof}
Feasibility of $\bar\dec$ gives $\bar\state_{k+1}=\Fmap(\bar\state_k,\bar\ctrl_k,\bar\tscale\hzero)$, and the linearised dynamics give $\bar\state_{k+1}+\step\state_{k+1}=\bar\state_{k+1}+A_k\step\state_k+B_k\step\ctrl_k+C_k\step\tscale$. The projection sets $\tilde\ctrl_k=\bar\ctrl_k+\step\ctrl_k-\Klqr{k}e_k$ and rolls out $\Fmap$ with $\bar\tscale+\step\tscale$; substituting $\tilde\state_k-\bar\state_k=\step\state_k+e_k$ and $\tilde\ctrl_k-\bar\ctrl_k=\step\ctrl_k-\Klqr{k}e_k$ into the Taylor expansion yields \eqref{eq:err}. Solving the recursion, $e_k=\sum_{j<k}\trans(k,j+1)R_j$. While $\|e_j\|\le\ninf{\step}$, the argument of $R_j$ is bounded by $(2+\bar K)\ninf{\step}$, so $\|R_j\|\le\tfrac12\Ltwo(2+\bar K)^2\ninf{\step}^2$ and $\|e_k\|\le\kproj\ninf{\step}^2\le\ninf{\step}$ for $\ninf{\step}\le1/\kproj$, which closes the induction. The input displacement is $\Klqr{k}e_k$. If $\|\trans(k,j)\|\le c_\Phi\lambda^{k-j}$, the sum is a geometric series.
\end{proof}
Without feedback ($\Klqr{k}=0$), $\trans$ is the open-loop transition matrix; position integrates heading, so $\tsum$ grows at least linearly with $\Nint$. This is why the projection uses the TV-LQR gains.

\begin{lemma}[Restriction]\label{lem:restrict}
There are $\kQs<\infty$ and $r_{\mathrm p}>0$, depending only on the bounds of A2, on $\Nint$ and on $\Dscale$, such that for $\kQ\ge\kQs$ and $\trr\le r_{\mathrm p}$ the trial point satisfies $c(\tilde\dec)=0$ and $d(\tilde\dec)\le0$. The step $\step=0$, $\tlev=0$ is feasible for \eqref{eq:sub} for every $\trr>0$.
\end{lemma}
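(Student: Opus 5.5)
The proof has two parts: the easy claim about $\step=0$, and feasibility of the projected trial point for large $\kQ$ and small $\trr$.

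\emph{The step $\step=0$, $\tlev=0$.} The dynamics rows of \eqref{eq:sub} hold trivially with $\step\state_0=0$. The trust-region row $\ninf{\Dscale\step}\le\tlev\le\trr$ reads $0\le0\le\trr$. The restriction term vanishes at $\tlev=0$, so each linearised inequality reduces to $d_i(\bar\dec)\le[d_i(\bar\dec)]_+$, which holds for any real number.

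\emph{Feasibility of the trial point.} Equality feasibility $c(\tilde\dec)=0$ comes directly from the construction of \eqref{eq:proj}. The rollout uses $\Fmap$ with $\tilde\state_0=\state_{\mathrm{init}}$, so the RK4 transitions and the initial condition hold exactly. No smallness is needed, only that the rollout stays in $\mathcal Z_\rho$, where $\Fmap$ is defined.

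For the inequalities, fix a row $d_i$ and write
\[
d_i(\tilde\dec)=d_i(\bar\dec)+\nabla d_i(\bar\dec)^\top\step+\nabla d_i(\bar\dec)^\top\bigl(\tilde\dec-\bar\dec-\step\bigr)+T_i,
\]
where $T_i$ is the second-order Taylor remainder of $d_i$ between $\bar\dec$ and $\tilde\dec$. The rows $d_i$ are of three kinds: affine bounds, the terminal box, and corridor rows built from $a_i^\top\corner{j}$ and $\sweeps{k,j}$, all $C^2$ on $\mathcal Z_\rho$ by A2. By compactness there are constants $G\ge\|\nabla d_i\|$ and $L_d$ bounding the second derivatives.

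Lemma~\ref{lem:proj} gives, for $\ninf{\step}\le1/\kproj$,
\[
\ninf{\tilde\dec-(\bar\dec+\step)}\le(1+\bar K)\kproj\ninf{\step}^2 .
\]
Hence $\ninf{\tilde\dec-\bar\dec}\le2\ninf{\step}$, and
\[
|T_i|+G(1+\bar K)\kproj\ninf{\step}^2\le c_1\ninf{\step}^2,
\qquad c_1=G(1+\bar K)\kproj+2L_d .
\]
Using $\ninf{\step}\le\ninf{\Dscale^{-1}}\,\ninf{\Dscale\step}\le\ninf{\Dscale^{-1}}\tlev$, the error is at most $c_1\ninf{\Dscale^{-1}}^2\tlev^2$.

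Set $\kQs=c_1\ninf{\Dscale^{-1}}^2$. The subproblem constraint with $\restr(\tlev)\ge\kQs\tlev^2$ then gives
\[
d_i(\tilde\dec)\le[d_i(\bar\dec)]_+=0,
\]
because $\bar\dec$ is feasible. This covers both restrictions: for the bilinear one, $\kQ\trr\tlev\ge\kQ\tlev^2$ since $\tlev\le\trr$.

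Next choose $r_{\mathrm p}$ so that $\ninf{\Dscale^{-1}}r_{\mathrm p}\le\min(1/\kproj,\rho/2)$. This keeps the step inside the validity region of Lemma~\ref{lem:proj}. It also keeps every segment from $\bar\dec$ to $\tilde\dec$ inside the $\rho$-neighbourhood $\mathcal Z_\rho$, where the constants $L_2$, $L_d$ and $G$ are valid. In particular $|\steer|<\pi/2$ there, so $\sweeps{k,j}$ stays smooth.

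\emph{Main obstacle.} The delicate point is that the constants must be uniform in the iterate $\bar\dec$. The gains $\Klqr{k}$ and the transition matrices $\trans$ depend on the linearisation at $\bar\dec$. I would argue that the Riccati recursion on $(A_k,B_k)$ with fixed $Q,R$ is continuous in $\bar\dec$ over the compact feasible set of A2, so $\bar K$ and $\tsum$, and with them $\kproj$, are bounded by a constant depending only on the A2 bounds, on $\Nint$ and on $\Dscale$.

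A second point is the corridor rows at $q=k+1$: they depend on $\state_{k+1}$, which is itself a projected state. This is already covered, because $d_i$ is expressed in the full variable $\dec$ and its perturbation is bounded through $\tilde\dec$. The remaining terms, such as the factor $2\sweeps{k,j}/\tscale$ in the $\tscale$-derivative of $\sweeps{k,j}$, are bounded because $\tscale\ge\tscale_{\min}-\rho>0$.
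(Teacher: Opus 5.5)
Your argument is essentially the paper's: you Taylor-split $d_i(\tilde\dec)$, bound the projection displacement by Lemma~\ref{lem:proj}, pass from $\ninf{\step}$ to $\tlev$ through $c_D=\|\Dscale^{-1}\|$, use $\restr\ge\kQ\tlev^2$ in both modes, and choose $r_{\mathrm p}$ so that everything stays in $\mathcal Z_\rho$. Expanding from $\bar\dec$ to $\tilde\dec$ instead of splitting at $\bar\dec+\step$ makes no difference, and your Riccati-continuity remark is the compactness argument the paper gives before Theorem~\ref{thm:fpsqp}. One constant needs fixing: $\ninf{\step}\le1/\kproj$ only gives $\ninf{\tilde\dec-\bar\dec}\le(2+\bar K)\ninf{\step}$, so your factor $2$ (and with it $c_Dr_{\mathrm p}\le\rho/2$) also requires $c_Dr_{\mathrm p}\le1/((1+\bar K)\kproj)$, which is why the paper takes $r_{\mathrm p}=\min\{1/(2c_D(1+\bar K)\kproj),\,2\rho/(3c_D)\}$.
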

\begin{proof}
The rollout satisfies the dynamics and the initial state by construction. For an inequality $d_i$ (corridor rows \eqref{eq:corr}, state and input bounds, terminal box), split
$d_i(\tilde\dec)=d_i(\bar\dec)+\nabla d_i^\top\step+[d_i(\bar\dec+\step)-d_i(\bar\dec)-\nabla d_i^\top\step]+[d_i(\tilde\dec)-d_i(\bar\dec+\step)]$.
By A2 the first bracket is at most $\tfrac12L_d\ninf{\step}^2$ and the second at most $G_d(1+\bar K)\kproj\ninf{\step}^2$ (Lemma~\ref{lem:proj}), with $L_d$ and $G_d$ defined below. Since $\ninf{\step}\le c_D\tlev$ with $c_D=\|\Dscale^{-1}\|$, the sum is at most $\kQs\tlev^2$ with $\kQs=c_D^2(\tfrac12L_d+G_d(1+\bar K)\kproj)$. Both restriction terms satisfy $\restr\ge\kQ\tlev^2$ because $\tlev\le\trr$, so the constraint of \eqref{eq:sub} gives $d_i(\tilde\dec)\le[d_i(\bar\dec)]_+=0$. Take $r_{\mathrm p}=\min\{1/(2c_D(1+\bar K)\kproj),\,2\rho/(3c_D)\}$. Then $\ninf{\step}\le c_Dr_{\mathrm p}\le1/\kproj$, Lemma~\ref{lem:proj} applies, $\ninf{\tilde\dec-(\bar\dec+\step)}\le\tfrac12\ninf{\step}$, and $\bar\dec+\step$, $\tilde\dec$ and the segments used in the Taylor expansions lie within $\tfrac32c_Dr_{\mathrm p}\le\rho$ of the feasible set, where $L_d$ and $G_d$, the bounds on the second and first derivatives of the $d_i$ over $\mathcal Z_\rho$, apply. For $\step=0$ every restriction term vanishes and the constraints reduce to $d(\bar\dec)\le[d(\bar\dec)]_+$.
\end{proof}
For the corridor rows, $L_d$ contains the curvature of $\corner{j}$ in $\head$; Lemma~\ref{lem:dir} shows that heading-aligned corridors reduce the part of it that acts on the side rows.

\begin{lemma}[Finitely many verification failures]\label{lem:ratchet}
Under A1, A2 and A6, the verification \eqref{eq:verify}, together with the displacement test of Theorem~\ref{thm:fpsqp}, fails at most $\lceil\log_{\beta_\kappa}(\kappa^\dagger/\kappa_{\mathrm{floor}}^0)\rceil+1$ times, where $\kappa^\dagger<\infty$ is given in the proof and $\kappa_{\mathrm{floor}}^0$ is the initial lower bound on $\kQ$. No relation between $\trr_{\max}$ and $r_{\mathrm p}$ is needed.
\end{lemma}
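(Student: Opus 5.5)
The plan is to show that once $\kQ$ exceeds a finite threshold $\kappa^\dagger$, every trial point passes both the verification \eqref{eq:verify} and the displacement test, whatever the current radius $\trr\le\trr_{\max}$. By A6, each failure multiplies the floor on $\kQ$ by $\beta_\kappa$, and later decreases of $\kQ$ never go below that floor. After at most $\lceil\log_{\beta_\kappa}(\kappa^\dagger/\kappa_{\mathrm{floor}}^0)\rceil$ failures the floor, and with it $\kQ$ at every later iteration, is at least $\kappa^\dagger$, so at most one further failure can occur, the one that triggers the final raise. The count follows. The substance is the construction of $\kappa^\dagger$ so that it works independently of $\trr$. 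This is the point of the remark that no relation between $\trr_{\max}$ and $r_{\mathrm p}$ is needed.

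The key observation is that a large $\kQ$ itself confines the step, even when the radius is large. In the subproblem \eqref{eq:sub}, every constraint row reads $d_i(\bar\dec)+\nabla d_i^\top\step+\restr(\tlev)\le[d_i(\bar\dec)]_+=0$. Among the $d_i$ are the two-sided box bounds, for instance on $\tscale$, on the inputs, or on any component whose bound set is compact by A2. For such a pair $d_\pm=\pm z_l-\text{bound}$, adding the two rows removes the linear term:
\[
\restr(\tlev)\le -\bigl(d_+(\bar\dec)+d_-(\bar\dec)\bigr)=\text{width}_l .
\]
The bilinear form satisfies $\restr\ge\kQ\tlev^2$ because $\tlev\le\trr$, so in both modes $\kQ\tlev^2\le W$, where $W$ is the smallest box width. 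Hence $\tlev\le\sqrt{W/\kQ}$, and $\ninf{\step}\le c_D\sqrt{W/\kQ}$. I will check that at least one two-sided bound is always present. The $\tscale$ bound $[\tscmin,\tscmax]$ serves; the only requirement is that $\restr$ enters that row, and if the paper's $\tscale$ bound is unrestricted I will use the input bounds instead. Next I set
\[
\kappa^\dagger=\max\bigl\{\kQs,\;W/r_{\mathrm p}^2\bigr\},
\]
with $\kQs,r_{\mathrm p}$ from Lemma~\ref{lem:restrict}. For $\kQ\ge\kappa^\dagger$ we then get $\tlev\le r_{\mathrm p}$. This is exactly what the proof of Lemma~\ref{lem:restrict} uses: it uses $\tlev\le r_{\mathrm p}$ and $\restr\ge\kQ\tlev^2$, not $\trr\le r_{\mathrm p}$ itself.

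I will re-read the proof of Lemma~\ref{lem:restrict} with $\tlev$ in place of $\trr$. Lemma~\ref{lem:proj} applies since $\ninf{\step}\le c_Dr_{\mathrm p}\le1/\kproj$. All Taylor segments stay in $\mathcal Z_\rho$, so $d_i(\tilde\dec)\le0$, and verification passes because $\tauv\ge0$. The displacement bound from Lemma~\ref{lem:proj} gives $\ninf{\tilde\dec-(\bar\dec+\step)}\le(1+\bar K)\kproj\ninf{\step}^2\le\tfrac12\ninf{\step}$. The displacement test, however, is stated in the $2$-norm, so I will either shrink $r_{\mathrm p}$ by a dimension factor $\sqrt{n_z}$ (finite for fixed $\Nint$) or absorb that factor into $\kappa^\dagger$. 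Finally, I will note that the constants $L_d,G_d,\bar K,\kproj$ are uniform over feasible $\bar\dec$, which holds because A2 makes the feasible set compact. The rebuilds of A5 do not change these bounds, since corridor rows have unit norm and all corridors lie in $\Map$.

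The main obstacle is the confinement step. It has to be uniform in $\bar\dec$ and must not rely on $\trr$. The worry is whether some constraint pair with both $\restr$ terms and bounded width is always present; if not, I will fall back on the terminal box $\Xf$, whose width $2\bar\state$ is fixed, to obtain the same inequality.
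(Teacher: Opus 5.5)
Your proposal is correct and follows the paper's proof. A large $\kQ$ forces $\tlev\le r_{\mathrm p}$ whatever $\trr$, and the argument of Lemma~\ref{lem:restrict} needs only that bound, with $r_{\mathrm p}$ shrunk by the dimension factor for the displacement test; the ratcheted floor then gives the count. The one difference is the confinement step. The paper uses a single corridor row, $\kQ\tlev^2\le c_s+Gc_D\tlev$, whereas you add two opposite input-bound rows so that the linear terms cancel and $2\kQ\tlev^2$ is at most the box width. Input bounds are the right choice here, since the $\tscale$ bounds enter \eqref{eq:sub} without restriction, and this variant is slightly simpler and independent of the corridors.
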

\begin{proof}
Let $c_s=\max_i\sup\{-d_i(\dec):\dec\text{ feasible for \eqref{eq:PF}}\}$, finite by A2, and let $G$ bound $\|\nabla d_i\|_1$ on the feasible set. Take any corridor row $i$; both restriction terms satisfy $\restr\ge\kQ\tlev^2$ because $\tlev\le\trr$, so its constraint in \eqref{eq:sub} gives $\kQ\tlev^2\le-d_i(\bar\dec)-\nabla d_i^\top\step\le c_s+Gc_D\tlev$, hence $\tlev\le\bar t(\kQ)=\big(Gc_D+(G^2c_D^2+4\kQ c_s)^{1/2}\big)/(2\kQ)$, which decreases to zero as $\kQ\to\infty$. Choose $\hat\kappa$ with $\bar t(\hat\kappa)\le r_{\mathrm p}$ and set $\kappa^\dagger=\max(\kQs,\hat\kappa)$. For $\kQ\ge\kappa^\dagger$, every feasible point of \eqref{eq:sub} has $\tlev\le r_{\mathrm p}$, whatever $\trr$, and the proof of Lemma~\ref{lem:restrict} shows that the trial point passes the verification; it also passes the displacement test if $r_{\mathrm p}$ is taken smaller by the dimension factor between $\ninf{\cdot}$ and $\|\cdot\|_2$. A failure therefore implies $\kQ<\kappa^\dagger$. After a failure, $\kQ$ and its lower bound both equal $\beta_\kappa$ times the value of $\kQ$ before, and later decreases stop at the lower bound. After $n$ failures the lower bound is at least $\beta_\kappa^{n}\kappa_{\mathrm{floor}}^0$, which reaches $\kappa^\dagger$ for the stated $n$. Since $\kQ$ increases only after failures, it is also bounded above.
\end{proof}
Neither $\kQs$ nor $r_{\mathrm p}$ is computed; the lemmas need only that they exist. On the 818 solved benchmark cases, the verification failed 257 times in 77\,982 Phase-II iterations, at most five times in one case. %

\subsection{Decrease of the restricted subproblem}

\begin{lemma}[Decrease]\label{lem:descent}
Let $\dec^\star$ be feasible for \eqref{eq:PF}, satisfy MFCQ and not be a KKT point, and let $\kQ\le\bar\kappa$. There are a neighbourhood $\mathcal N$ of $\dec^\star$ and constants $\bar\varepsilon,\hat t,\hat r>0$ such that for every feasible $\bar\dec\in\mathcal N$ and every $\trr\le\hat r$ the optimal value $m^\star(\trr)$ of the bilinear-mode subproblem satisfies
\begin{equation}\label{eq:cauchy}
m^\star(\trr)\le-\tfrac{\bar\varepsilon}{4}\min(\trr,\hat t).
\end{equation}
For the quadratic mode, \eqref{eq:cauchy} holds for every $\trr>0$ with $\hat t$ replaced by $\min(\hat t,\gamma/(2\bar\kappa))$.
\end{lemma}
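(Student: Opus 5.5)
The plan is to build, from MFCQ at $\dec^\star$, one explicit feasible point of the subproblem \eqref{eq:sub} whose model value is negative and proportional to the radius. The optimal value $m^\star(\trr)$ is then at most the model value of that point.

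\emph{Step 1: an MFCQ descent direction.} Since $\dec^\star$ satisfies MFCQ but is not a KKT point, Farkas' lemma yields a direction $w$ with the following properties, normalised to $\ninf{\Dscale w}=1$:
\begin{itemize}
\item $\nabla c(\dec^\star)w=0$, so the linearised dynamics with $\step\state_0=0$ hold;
\item $\nabla d_i(\dec^\star)^\top w\le-\gamma$ for every active $i$;
\item $\nabla\cost(\dec^\star)^\top w\le-\bar\varepsilon$, for some $\gamma,\bar\varepsilon>0$.
\end{itemize}
To see this, suppose no such $w$ existed. The system ``$\nabla\cost^\top w<0$, $\nabla d_{\mathcal A}^\top w<0$, $\nabla c\,w=0$'' would then be infeasible. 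By Motzkin's alternative there would be multipliers $(\lambda_0,\lambda_{\mathcal A})\ge0$, not both zero, and $\nu$ with $\lambda_0\nabla\cost+\nabla d_{\mathcal A}\lambda_{\mathcal A}+\nabla c^\top\nu=0$. MFCQ excludes $\lambda_0=0$, so $\lambda_0>0$ and $\dec^\star$ would be a KKT point, a contradiction.

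Next, transport $w$ to nearby feasible points. Let $\Pi(\bar\dec)$ be the projection onto the null space of $\nabla c(\bar\dec)$. It is continuous because $\nabla c$ has full row rank, as the RK4 rollout makes the state block invertible. Set $w(\bar\dec)=\Pi(\bar\dec)w$, renormalised. By continuity of the gradients (A2) there is a neighbourhood $\mathcal N$ on which the three properties hold with $\gamma/2$ and $\bar\varepsilon/2$. On $\mathcal N$, every constraint with $d_i(\bar\dec)\ge-\hat t\,\gamma/4$ has $\nabla d_i^\top w\le-\gamma/2$; shrinking $\mathcal N$ if needed, these are exactly the constraints active at $\dec^\star$. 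The remaining constraints satisfy $d_i(\bar\dec)\le-2\eta$ for some $\eta>0$.

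\emph{Step 2: the trial point.} Take $\step=s\,w(\bar\dec)$ and $\tlev=s$ with $s=\min(\trr,\hat t)$, so that $\ninf{\Dscale\step}\le\tlev\le\trr$. We check the relaxed linearised rows $d_i+\nabla d_i^\top\step+\restr(\tlev)\le[d_i]_+=0$; the right-hand side vanishes because $\bar\dec$ is feasible.
\begin{itemize}
\item \emph{Bilinear mode,} $\restr=\kQ\trr s\le\bar\kappa\hat r s$. For near-active rows the left-hand side is at most $s(-\gamma/2+\bar\kappa\hat r)\le0$ once $\hat r\le\gamma/(2\bar\kappa)$. For inactive rows it is at most $-2\eta+Gs+\bar\kappa\hat r s\le0$ once $\hat t$ and $\hat r$ are small.
\item \emph{Quadratic mode,} $\restr=\kQ s^2$. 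Here $\kQ s^2\le\bar\kappa s\,\min(\hat t,\gamma/(2\bar\kappa))\le s\gamma/2$. This holds for every $\trr$, because $s$ is capped by the modified $\hat t$ rather than by $\trr$; this explains the replacement of $\hat t$ in the statement.
\end{itemize}

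\emph{Step 3: the model value.} At this point the model value is at most $-s\bar\varepsilon/2+\tfrac12c_Hs^2c_D^2$, using A3. Shrinking $\hat t$ so that $c_Hc_D^2\hat t\le\bar\varepsilon/2$ gives a model value of at most $-\tfrac{\bar\varepsilon}{4}s$. This is \eqref{eq:cauchy}, since $m^\star(\trr)$ is at most the model value of any feasible point.

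\emph{Main obstacle.} The delicate step is Step 1's uniformity. The MFCQ direction has to be carried to all feasible $\bar\dec\in\mathcal N$ while keeping the equality constraints exact and the active-set gap $\eta$ uniform. I would handle this with the null-space projection and the continuity argument above, using compactness from A2 for the constants. The alternative route, via the implicit function theorem on the rollout, would also work.
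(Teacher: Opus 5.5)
Your proposal is correct and follows the paper's proof essentially step for step. Both arguments use a direction at $\dec^\star$ that strictly decreases $\cost$ and every active constraint, carry it to nearby feasible $\bar\dec$ by projecting onto the null space of the full-row-rank dynamics Jacobian, take the trial step $s\,d(\bar\dec)$ with $\tlev=s=\min(\trr,\hat t)$, split the constraints into those active and inactive at $\dec^\star$ with $\hat r=\gamma/(2\bar\kappa)$, and cap $\hat t$ by $\gamma/(2\bar\kappa)$ so that the quadratic mode works for every radius. The only difference is how you obtain the strict direction: you get it in one step from Motzkin's alternative, with MFCQ ruling out a zero multiplier on $\nabla\cost$, whereas the paper takes a convex combination of a Farkas direction (which certifies non-KKT) and the MFCQ direction; these are equivalent standard routes.
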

\begin{proof}
Because $\dec^\star$ is not KKT, some $w$ with $\ninf{\Dscale w}\le1$ satisfies the linearised dynamics, $\nabla d_i^\top w\le0$ for the active constraints and $\nabla\cost^\top w<0$; MFCQ supplies $v$ with the linearised dynamics and $\nabla d_i^\top v<0$ for the active constraints. A convex combination, normalised to $\ninf{\Dscale d}=1$, gives a direction $d$ at $\dec^\star$ with $\nabla\cost^\top d<0$ and $\nabla d_i^\top d<0$ for the active constraints. The dynamics Jacobian has full row rank (it contains the identity blocks of $\step\state_{k+1}$ and $\step\state_0$), so the projection of $d$ onto the null space of the linearised dynamics at $\bar\dec$ defines a direction $d(\bar\dec)$ that depends continuously on $\bar\dec$ and equals $d$ at $\dec^\star$. After rescaling, and shrinking $\mathcal N$ if necessary, $d(\bar\dec)$ satisfies the linearised dynamics at $\bar\dec$, $\ninf{\Dscale d(\bar\dec)}\le1$, $\nabla\cost^\top d(\bar\dec)\le-\bar\varepsilon/2$ and $\nabla d_i^\top d(\bar\dec)\le-\gamma/2$ for the constraints active at $\dec^\star$, for every $\bar\dec\in\mathcal N$; by continuity the constraints inactive at $\dec^\star$ satisfy $d_i\le-\delta/2$ and $\|\nabla d_i\|\le G$ on $\mathcal N$~\cite[Lemmas~2.1, 3.1]{Wright2004FPSQP}. %
Set $\step=s\,d(\bar\dec)$, $\tlev=s\le\trr$. For an active constraint, $d_i+s\nabla d_i^\top d+\kQ\trr s\le s(-\gamma/2+\bar\kappa\trr)\le0$ when $\trr\le\hat r=\gamma/(2\bar\kappa)$; for an inactive one, $-\delta/2+s(G+\bar\kappa\hat r)\le0$ when $s\le\delta/(2(G+\bar\kappa\hat r))$. The model value is at most $-s\bar\varepsilon/2+c_Hs^2/2\le-s\bar\varepsilon/4$ for $s\le\bar\varepsilon/(2c_H)$. Taking $s=\min(\trr,\hat t)$ with $\hat t=\min(\delta/(2(G+\bar\kappa\hat r)),\bar\varepsilon/(2c_H))$ proves \eqref{eq:cauchy}; for the quadratic mode, $\kQ s^2\le\gamma s/2$ replaces the condition on $\trr$.
\end{proof}
In the bilinear mode the restriction grows with $\trr$, and for $\trr>\hat r$ the subproblem may return $m^\star=0$ although $\bar\dec$ is not critical. The sufficient-decrease rule of Section~\ref{sec:m-accept} rejects a step with $-m^\star<\nu\trr^2$ and halves $\trr$; for $\trr\le\tilde r=\min(\hat r,\bar\varepsilon/(4\nu),(\bar\varepsilon\hat t/(4\nu))^{1/2})$, \eqref{eq:cauchy} gives $-m^\star\ge\nu\trr^2$, so the rule never rejects a step there.

\begin{lemma}[Tolerances]\label{lem:tol}
Let the subproblem be solved to constraint tolerance $\taus$, let its right-hand side be $[d(\bar\dec)]_+$, and accept a trial point only if the corridor constraints satisfy $\gcon\le\tauv$ with $0\le\tauv<\clr$. Then $\step=0$ is feasible for \eqref{eq:sub}, and every accepted iterate satisfies $\ghat=\gcon-\clr\le\tauv-\clr<0$.
\end{lemma}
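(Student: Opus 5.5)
The statement has two parts, and I treat them separately, both directly from the definitions of the subproblem \eqref{eq:sub} and of the acceptance rule.

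For the first part, set $\step=0$ and $\tlev=0$. The linearised dynamics hold trivially with $\step\state_0=0$. The trust-region constraint $\ninf{\Dscale\step}\le\tlev\le\trr$ holds for every $\trr>0$. Both restriction modes give $\restr(0)=0$. Each inequality row then reduces to $d_i(\bar\dec)\le[d_i(\bar\dec)]_+$, which holds for every real value of $d_i(\bar\dec)$. So the point is exactly feasible, and a fortiori feasible to tolerance $\taus$. The only thing to verify here is that $\bar\dec$ satisfies the dynamics exactly, so that no dynamics residual enters the equality constraints. This holds because every Phase-II iterate is produced by the rollout \eqref{eq:proj}: either the first iterate after Phase~I (A4), or an accepted trial point. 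It holds regardless of how inexactly previous QPs were solved, since the projection rolls out $\Fmap$ exactly.

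For the second part, argue by the acceptance rule. A trial point is accepted only after \eqref{eq:verify}, which requires $\gcon(\tilde\dec)\le\tauv$ for every corridor row. By \eqref{eq:corr}, $\gcon=a_i^\top\corner{j}(\state_q)+\sweeps{k,j}+\clr-b_i$, so $\ghat=\gcon-\clr$ is the same row without the buffer. Hence $\ghat\le\tauv-\clr$, and $\tauv<\clr$ makes this strictly negative.

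It remains to check that the bound survives the operations that change corridors without moving the iterate. Here I invoke Lemma~\ref{lem:manage}. A rebuild is accepted only if \eqref{eq:rebuild} holds, that is, only if the smallest slack rises. The new rows therefore satisfy $\gcon\le-s_k^{\mathrm{new}}\le-s_k^{\mathrm{old}}-\sgain$. Since the old slack obeys $s_k^{\mathrm{old}}\ge-\tauv$, the new rows obey $\gcon\le\tauv-\sgain<\tauv$. The same bound applies to a DA-ASC box substituted at the start of Phase~II, which contains the inflated corners by construction. The first Phase-II iterate satisfies the bound by the verification at the end of Phase~I.

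The main obstacle is bookkeeping rather than analysis. The exact-arithmetic statements of Lemmas~\ref{lem:manage} and~\ref{lem:restrict} use $d\le0$, and I must make sure the tolerance versions above are self-consistent. In particular, a substituted box tested only for containment of the inflated corners yields $\gcon\le0\le\tauv$, which is enough for the claim.
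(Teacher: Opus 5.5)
Your proof is correct and follows the paper's: with $\step=0$, $\tlev=0$ the restriction term vanishes, each row reduces to $d_i(\bar\dec)\le[d_i(\bar\dec)]_+$, and the bound on $\ghat$ comes directly from the acceptance test \eqref{eq:verify}. Your extra bookkeeping goes beyond the paper's two-line proof and is sound: new rows after a rebuild satisfy $\gcon\le\tauv-\sgain$, and a substituted box gives $\gcon\le0$. Checking that $\bar\dec$ satisfies the dynamics is unnecessary, because the linearised dynamics in \eqref{eq:sub} are homogeneous in $\step$.
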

\begin{proof}
The first claim is the last statement of Lemma~\ref{lem:restrict}, which does not use $d(\bar\dec)\le0$. The second is the acceptance test.
\end{proof}
Safety thus rests on the verification and not on the accuracy of the QP solver; the accuracy affects only optimality.

\subsection{Directional remainder}

\begin{lemma}[Heading-aligned rows]\label{lem:dir}
For a body point $\beta$, a heading change $\Delta\head$ and a unit row $a$, the remainder of the first-order expansion of $\corner{\beta}$ in $\head$ satisfies
\begin{equation*}
|a^\top\eta|\le\tfrac12\Delta\head^2\,|a^\top\Rot{\head}\beta|+\tfrac16|\Delta\head|^3\,|a^\top\Rot{\head}J\beta| .
\end{equation*}
\end{lemma}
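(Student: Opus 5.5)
The plan is to expand $\corner{\beta}(\head+\Delta\head)=\pos+\Rot{\head+\Delta\head}\beta$ exactly in $\Delta\head$. I will use $\Rot{\head+\Delta\head}=\Rot{\head}\Rot{\Delta\head}$ and $\Rot{\Delta\head}=\cos\Delta\head\,I+\sin\Delta\head\,J$, where $J$ is the rotation by $\pi/2$ and commutes with every rotation. The first-order expansion in $\head$ is $\pos+\Rot{\head}\beta+\Delta\head\,\Rot{\head}J\beta$, since $\partial_\head\Rot{\head}\beta=\Rot{\head}J\beta$. The remainder is therefore
\begin{equation*}
\eta=(\cos\Delta\head-1)\,\Rot{\head}\beta+(\sin\Delta\head-\Delta\head)\,\Rot{\head}J\beta .
\end{equation*}
This splits $\eta$ exactly along the two orthogonal vectors $\Rot{\head}\beta$ and $\Rot{\head}J\beta$. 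The position $\pos$ drops out because $\corner{\beta}$ is affine in $\pos$.

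Next I project onto the unit row $a$ and apply the triangle inequality:
\begin{equation*}
|a^\top\eta|\le|1-\cos\Delta\head|\,|a^\top\Rot{\head}\beta|+|\Delta\head-\sin\Delta\head|\,|a^\top\Rot{\head}J\beta| .
\end{equation*}
Two scalar inequalities then finish the proof: $0\le1-\cos x\le x^2/2$ and $|x-\sin x|\le|x|^3/6$ for all real $x$. Both follow from Taylor's theorem with Lagrange remainder, or by integrating $\sin x\le x$ (for $x\ge0$, then by symmetry) and $1-\cos x\le x^2/2$ once more. Substituting them gives the stated bound. No earlier result of the paper is needed.

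No step is a real obstacle. The only care needed is to define the remainder as the expansion in $\head$ alone, with $\pos$ held fixed. In Lemma~\ref{lem:restrict} the expansion is joint in $(\pos,\head)$, but the corner map is affine in $\pos$, so the same $\eta$ results. I would also add the interpretation that gives the claimed factor of about four. For a side row of a heading-aligned box, $a=\pm\eperp{\head}$ (up to the small misalignment $\head-\phi_k$), the quadratic coefficient is $|a^\top\Rot{\head}\beta|=|\beta_y|\le\wid/2\approx0.97$~m. For a row at a large angle to the heading, the quadratic coefficient can reach $|\beta_x|$, up to $\wb+\Lf=3.76$~m. The ratio of these is roughly four, while the cubic term is of higher order.
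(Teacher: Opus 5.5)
Your proof is correct and is the paper's argument: factor $R(\theta+\Delta\theta)=R(\theta)(\cos\Delta\theta\,I+\sin\Delta\theta\,J)$, isolate $\eta=R(\theta)[(\cos\Delta\theta-1)\beta+(\sin\Delta\theta-\Delta\theta)J\beta]$, and bound the two scalar factors by $\Delta\theta^2/2$ and $|\Delta\theta|^3/6$. Your side remarks, that the position drops out of the remainder and that aligned rows give a coefficient about four times smaller, agree with the paper's discussion after the lemma.
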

\begin{proof}
$\Rot{\head+\Delta\head}=\Rot{\head}(\cos\Delta\head\,I+\sin\Delta\head\,J)$ gives $\eta=\Rot{\head}[(\cos\Delta\head-1)\beta+(\sin\Delta\head-\Delta\head)J\beta]$; bound the two scalar factors by $\Delta\head^2/2$ and $|\Delta\head|^3/6$.
\end{proof}
For a side row of a box aligned with the heading, $a=\pm\eperp{\head}$ and $|a^\top\Rot{\head}\beta|=|\beta_y|=\wid/2$, so the second-order coefficient is $\wid/4\approx0.49$~m for every corner, against $\|\beta\|_2/2\approx1.94$~m for the front corners when the row is not aligned. %

\subsection{Proof of Theorem~\ref{thm:fpsqp}}
(i) By A4 the first iterate is feasible. A trial point that fails the verification is discarded, and one that passes satisfies $c=0$ and $d\le0$; rebuilds keep feasibility (Lemma~\ref{lem:manage}). Corollary~\ref{cor:shared} with Remark~\ref{rem:rk4} ($\tauv=0$) gives the clearance, and the terminal box is one of the constraints. (ii) A step is accepted only if the predicted decrease $-m^\star$ is positive and the ratio of actual to predicted decrease exceeds $\eta_1>0$; rebuilds leave $\cost$ unchanged. (iii) By Lemmas~\ref{lem:finite} and~\ref{lem:ratchet}, after finitely many iterations the corridors and the lower bound of $\kQ$ no longer change, $\kQ\le\bar\kappa$ for some $\bar\kappa$, and no trial point fails the verification or the displacement test. From then on, Algorithm~\ref{alg:main} is the method of~\cite{Wright2004FPSQP} applied to \eqref{eq:PF}: feasibility of the perturbed point follows from Lemma~\ref{lem:restrict}, asymptotic exactness, $\ninf{\tilde\dec-(\bar\dec+\step)}\le\phi(\ninf{\step})\ninf{\step}$ with $\phi(s)=\min(\tfrac12,(1+\bar K)\kproj s)$, in the Euclidean norm of~\cite{Wright2004FPSQP}: for short steps from Lemma~\ref{lem:proj}, with the constant multiplied by a factor that depends only on the dimension, and for all other accepted steps from the displacement test, and the decrease bound of their Lemma~3.3 is replaced by Lemma~\ref{lem:descent} together with the sufficient-decrease rule. Their proof of Theorem~3.8 uses the decrease bound only in a neighbourhood of an accumulation point, in the form \eqref{eq:cauchy}, and otherwise only A1--A3 and the two properties above. It therefore applies to the tail of the sequence. \hfill$\square$

\subsection{Proof of Proposition~\ref{prop:obcact}}
Let $w=A_m^\top\lamo{m,k}$, so $\|w\|_2\le1$. For a body point $\xi\in\Body$ at pose $q\in\{k,k+1\}$, $y=\pos_q+\Rot{\head_q}\xi$, the equality constraint and $\zetao^q_{m,k}\ge0$ give
$w^\top y-\lamo{m,k}^\top b_m=(A_m\pos_q-b_m)^\top\lamo{m,k}-(\zetao^q_{m,k})^\top\Gamma\xi\ge(A_m\pos_q-b_m)^\top\lamo{m,k}-\gamma^\top\zetao^q_{m,k}\ge\dmin+\sweep{k}(\betamax)$.
Both end poses therefore lie in the closed half-plane $H=\{y:w^\top y\ge\lamo{m,k}^\top b_m+\dmin+\sweep{k}(\betamax)\}$, and so do the chords of the corners. Since $\Mb{k}(\beta)$ increases with $|\beta_x|$ and $|\beta_y|$, Proposition~\ref{prop:sweep} places each corner within $\sweep{k}(\betamax)$ of its chord, so the rectangle stays in $\{y:w^\top y\ge\lamo{m,k}^\top b_m+\dmin\}$. Every obstacle point satisfies $w^\top o=\lamo{m,k}^\top A_mo\le\lamo{m,k}^\top b_m$, which proves disjointness with separation at least $\dmin$. The map box is convex, and the same argument applies to its four sides. \hfill$\square$